\newcommand*{\Scale}[2][4]{\scalebox{#1}{$#2$}}
\theoremstyle{acmplain}
\newtheorem{theorem}{Theorem}[section]
\newtheorem{assumption}[theorem]{Assumption}
\renewcommand\footnotetextcopyrightpermission[1]{} 
\renewcommand\@formatdoi[1]{\ignorespaces}
\DeclareMathOperator{\st}{s.t.}
\begin{document}

\title{Learning Individual Treatment Effects under Heterogeneous Interference in Networks}

\author{Ziyu Zhao}
\email{benzhao.styx@gmail.com}
\affiliation{%
 \institution{Zhejiang University}
 \city{HangZhou}
 \country{China}}
 
\author{Yuqi Bai}
\email{y78bai@uwaterloo.ca}
\affiliation{%
\institution{University of Waterloo}
\city{Waterloo}
\country{Canada}}
 
\author{Kun Kuang}
\email{kunkuang@zju.edu.cn}
\affiliation{%
 \institution{Zhejiang University}
 \city{HangZhou}
 \country{China}}

\author{Ruoxuan Xiong}
\email{ruoxuan.xiong@emory.edu}
\affiliation{%
 \institution{Emory University}
 \city{Atlanta}
 \country{USA}}
 
\author{Qingyu Cao}
\email{qingyu.cqy@alibaba-inc.com}
\affiliation{%
 \institution{Alibaba Group}
 \city{HangZhou}
 \country{China}}

\author{Fei Wu}
\email{wufei@cs.zju.edu.cn}
\affiliation{%
 \institution{Zhejiang University}
 \city{HangZhou}
 \country{China}}


\begin{abstract}
Estimating individual treatment effects in networked observational data is a crucial and increasingly recognized problem. One major challenge of this problem is violating the Stable Unit Treatment Value Assumption (SUTVA), which posits that a unit's outcome is independent of others' treatment assignments. However, in network data, a unit's outcome is influenced not only by its treatment (i.e., direct effect) but also by the treatments of others (i.e., spillover effect) since the presence of interference. Moreover, the interference from other units is always heterogeneous (e.g., friends with similar interests have a different influence than those with different interests). 
In this paper, we focus on the problem of estimating individual treatment effects (including direct effect and spillover effect) under heterogeneous interference in networks. To address this problem, we propose a novel Dual Weighting Regression (DWR) algorithm by simultaneously learning attention weights to capture the heterogeneous interference from neighbors and sample weights to eliminate the complex confounding bias in networks. We formulate the learning process as a bi-level optimization problem. Theoretically, we give a generalization error bound for the expected estimation error of the individual treatment effects. Extensive experiments on four benchmark datasets demonstrate that the proposed DWR algorithm outperforms the state-of-the-art methods in estimating individual treatment effects under heterogeneous network interference.
\end{abstract}



\keywords{Individual Treatment Effects, Spillover Effects, Heterogeneous Interference, Networked Data}

\maketitle
\pagestyle{plain}


\section{Introduction}
With the surge in popularity of online social networks, there has been an exponential increase in the number of users, leading to the generation of vast quantities of observational data. This data is vital for estimating treatment effects in various fields, such as economics, epidemiology, and advertising.
Numerous methods \cite{johansson2016learning,louizos2017causal,yoon2018ganite,wager2018estimation,guo2020learning, chu2021graph, veitch2019using} have been proposed and achieved good results in some scenarios. However, the effectiveness of these methods relies on the \textit{stable unit treatment assumption} (SUTVA) \cite{cox1958planning}.
SUTVA assumes that the distribution of potential outcomes for one unit is not affected by the treatment assignment of other units when given the observed variables.
In social networks, however, \textit{interference} among individuals is a common occurrence. This interference is primarily attributed to social interactions, as discussed by \cite{forastiere2021identification}.
In epidemiology, for example, vaccination protects vaccinated individuals and reduces the probability of diagnosis in those around them \cite{nichol1995effectiveness}.
In econometric studies, neighborhood influence may also play a role in a household's decision to move \cite{sobel2006randomized}. 
In advertising, an ad's exposure may directly affect a user's purchase behavior and indirectly affect others in their social network through their acquisition behavior \cite{parshakov2020spillover}.
These examples show inter-unit interference, where one unit's treatment affects another's outcome.
In the presence of interference, a unit's outcome is determined not only by its treatment (i.e., direct effect) but also by the treatments of others (i.e., spillover effect), indicating the violation of SUTVA.
Hence, how to precisely estimate both direct and spillover effects from the networked observational data in the presence of interference is a vital and challenging problem. 

Previous literature on network interference \cite{liu2016inverse, sofrygin2017semi, ogburn2017causal, tchetgen2021auto} has primarily focused on estimating the average treatment effects (especially average spillover effect) in network observational data, lacking the ability to estimate the individual treatment effects.
Some recent approaches try to model the interference and use it to promote the performance of treatment effect estimation \cite{ma2021causal, ma2022learning}. However, these methods only consider the treatment of neighboring nodes as a feature to more accurately estimate the direct treatment effect, ignoring the spillover effect and failing to address the challenges encountered in estimating the spillover effect.
For literature studying the spillover effect \cite{li2020random,forastiere2021identification,jiang2022estimating}, anonymous interference or homogeneity is commonly assumed, implying no difference in the influence of neighboring nodes.
However, these assumptions do not necessarily hold in a real social network scenario since different units may respond differently to the treatments from other units, which means that the interference may be heterogeneous \cite{qu2021efficient}.
One of the primary sources of heterogeneity is the different social influences between connected units in social networks \cite{ma2011recommender, song2019session}.
Measuring heterogeneity is vital but overlooked for estimating treatment effects in networks.

\begin{figure*}[t]
    \centering
    \includegraphics[width=.6\linewidth]{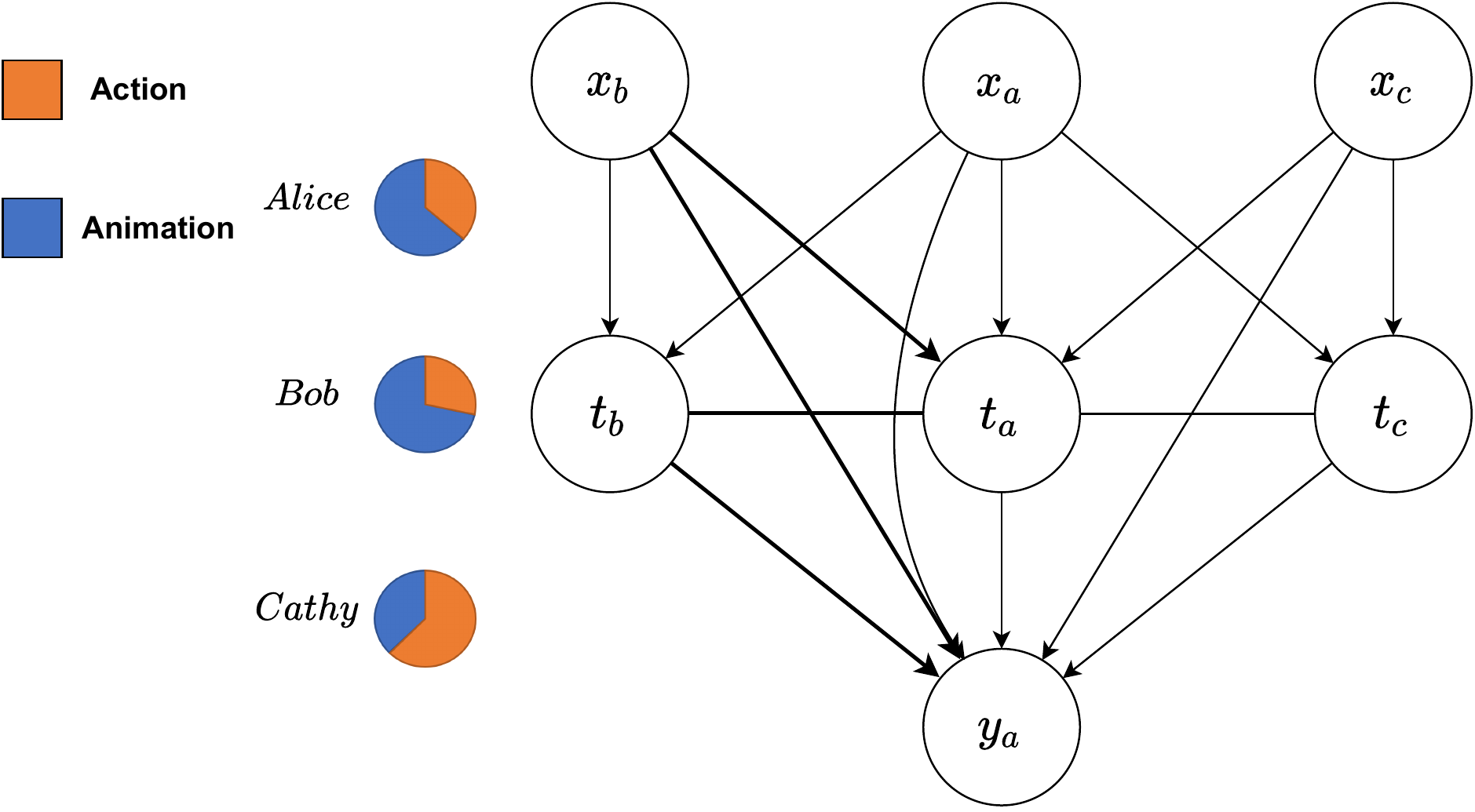}
    \vspace{-0.15in}
    \caption{A motivating example to illustrate the setting of heterogeneous interference in networks. The undirected edges mean that the treatment variables between connected units are associated, and the network data can be represented as a chain graph, which contains both directed and undirected edges \cite{tchetgen2021auto}.}
    \label{fig:example}
    \vspace{-0.15in}
\end{figure*}

\textbf{Motivating Example.} 
Fig.\ref{fig:example} presents a social network consisting of three units. We set the covariates $x$: the user's preferences for different categories of movies; the treatments $t$: to see \textit{James Bond} or \textit{Toy Story}; the outcomes $y$: the mood after watching the movie. 
As shown in Fig.\ref{fig:example}, we can see three types of interaction:
(i) a unit's preference for movies will affect the choice of movies (e.g., $x_a \rightarrow t_a$); (ii) a unit's preference will affect the choice of his/her friends (e.g., $x_a \rightarrow t_b$); (iii) a unit's choice of the movie will affect the choice of his/her friends (e.g., $t_a \rightarrow t_b$).
In this example, as for Alice, the confounding bias for estimating its direct (i.e., $t_a \rightarrow y_a$) and spillover (e.g., $\{t_b,t_c\} \rightarrow y_a$) effects is very complicated. 
Moreover, the interference from other units might be heterogeneous. Alice and Bob prefer action movies in this example, and Cathy is the opposite. Hence, Bob may have more influence on Alice than Cathy.
It is worth noting that, in contrast to the traditional causal graph modeled as a directed acyclic graph, the causal graph in Fig.\ref{fig:example} is a chain graph (a mixed graph containing both directed and undirected edges), similar to \cite{tchetgen2021auto}.
Here, we allow the treatment variables between units to interact, indicating that undirected edges exist between treatment variables of connected units.

In this scenario, we confront two primary challenges in estimating individual treatment effects from network observational data in the presence of interference:

(i) \textbf{Heterogeneous interference.}
As our motivating example highlights, acknowledging the heterogeneity due to varying social influences is crucial in estimating treatment effects.
Yet, we observe that existing literature often overlooks this heterogeneity in network data. Studies focused on network interference \cite{aronow2017estimating, forastiere2021identification,jiang2022estimating} typically assume peer exposure as a uniform proportion of treated neighbors, neglecting the varied influences of different neighbors.
Similarly, research on networked observational data \cite{guo2020learning, chu2021graph} utilizes graph convolutional networks \cite{kipf2016semi} to aggregate neighbor node information to obtain node representations without considering differences in the social influence of neighbors.
This oversight often leads to inaccurate estimations of treatment effects in network settings, making the capture of interference heterogeneity a significant challenge.

(ii) \textbf{Complex Confounding Bias.}
In the context of networked observational data, the issue of confounding bias is exacerbated by interference.
As shown in Fig.\ref{fig:interference}, in the network scenario, confounding biases arise from the correlation between confounders $\{X,X_N\}$ (the covariates of a unit along with its neighbors' covariates), treatments $T$, and peer exposures $Z$ (the summary of neighborhood treatments).
When estimating the direct effect of treatment $T$ on outcome $Y$ ($T\rightarrow Y$), confounding bias arises from the covariates $\{X_N,X\}$ along with peer exposures $Z$. Similarly, in the estimation of the spillover effect of $Z$ on $Y$ ($Z \rightarrow Y$), the confounding bias is introduced by ${X_N, X}$ along with the treatment $T$. These correlations between covariates ${X_N, X}$, treatment $T$, and peer exposures $Z$ hinder the estimation of treatment effects.
Previous works \cite{johansson2016learning,li2020continuous,zou2020counterfactual,guo2020learning, chu2021graph, ma2021causal, ma2022learning}  have primarily addressed the correlation between confounders and treatments, falling short in such intricate scenarios.
Although \citet{jiang2022estimating, cristali2022using} attempt to address bias from interference, they overlook the association between $Z$ and $T$ and fail to model heterogeneous interference effectively.

\begin{figure*}[t]
    \centering
    \includegraphics[width=.5\linewidth]{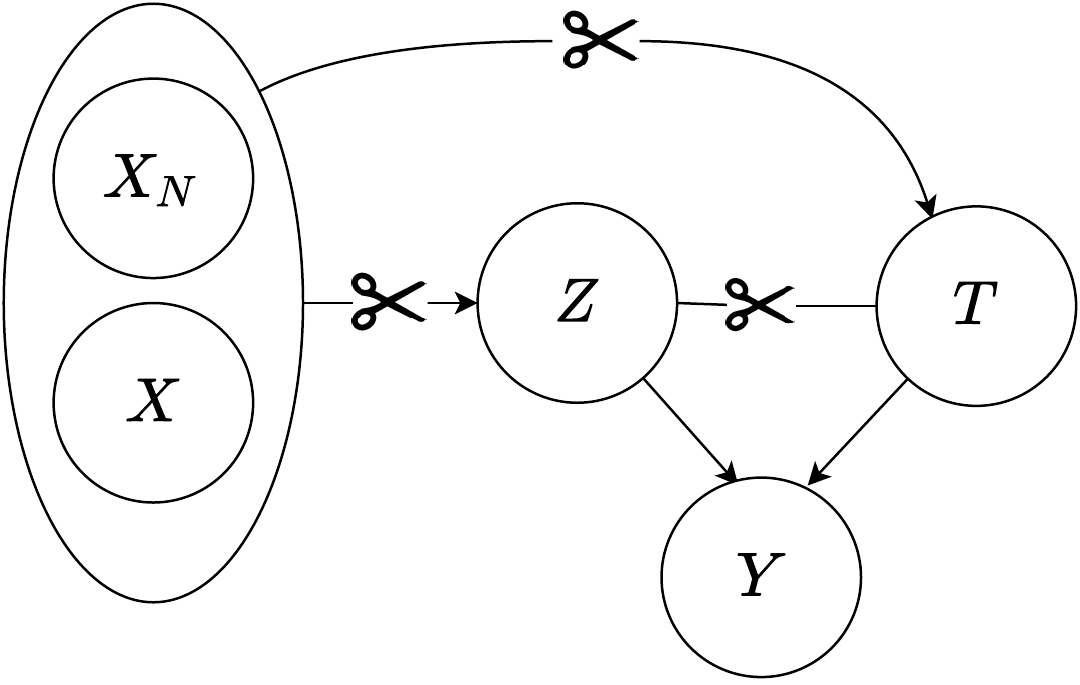}
    \caption{Confounding bias in the networked observational data. The correlations between $\{X,X_N\}$, $T$ and $Z$ leads to confounding bias. Therefore, to solve the problem of confounding bias, it is necessary to decorrelate these components.}
    \label{fig:interference}
    \vspace{-0.15in}
\end{figure*}

In this paper, we introduce a novel Dual Weighting Regression (DWR) algorithm, which simultaneously optimizes attention and sample weights to overcome the challenges previously outlined.
Specifically, the attention weights are designed to learn the heterogeneous interference from different nodes in a neighborhood through a graph attention mechanism.
With the attention weights, we summarize the neighboring nodes' treatment as the \textit{peer exposure} and aggregate the features of the neighboring nodes with Graph Attention Networks \cite{velivckovic2017graph,lee2019attention}.
On the other hand, the sample weights are designed to disentangle the associations between features, treatments, and peer exposures within networks. We create a calibration dataset where these elements are independent, specifically for training sample weights. These sample weights are then utilized in a weighted regression approach. 
The learning process of the DWR algorithm is formulated as a bi-level optimization problem by alternately optimizing the sample weight learning network and the outcome regression network.
Theoretically, We give a generalization-error bound for individual treatment effect estimation and show the effectiveness of the proposed DWR algorithm.
We compare our DWR algorithm with the state-of-the-art methods on several benchmark datasets. The empirical results show that the proposed algorithm outperforms these methods in both direct and spillover effects estimation.

Our contribution can be summarised as follows:
\begin{itemize}
    \item We investigate a more practical problem in estimating the individual treatment effects (e.g., direct and spillover effects) under heterogeneous interference in networks. 
    \item We propose a novel Dual Weighting Regression algorithm, which solves the heterogeneous and confounding bias challenges in the presence of interference by applying attention weights and sample weights to the regression.
    \item We theoretically give a generalization-error bound for treatment effects estimation and demonstrate the theoretical guarantees for our algorithm.
    \item The empirical results on four benchmark datasets show that the proposed Dual Weighting Regression algorithm outperforms the state-of-the-art methods.
\end{itemize}

\section{Problem Setup}
In this paper, we focus on estimating individual-level treatment effects from networked observational data in the presence of heterogeneous interference. 
Following \cite{guo2020learning}, the networked observational data can be formulated as  $\mathbbm{D} = \left(\{x_i, t_i, y_i \}_{i=1}^n, A \right)$.
For each unit $i$, we observe confounders $x_i \in \mathcal{X}$, binary treatment $t_i \in \{0,1\}$ and an outcome variable $y_i \in \mathbbm{R}$. $A$ is the adjacency matrix for an undirected graph $\mathbf{G}(\mathbf{V}, \mathbf{E})$, where $(v_i, v_j) \in \mathbf{E}$ indicates that there is an edge between node $v_i \in \mathbf{V}$ and $v_j \in \mathbf{V}$.
Following the Neyman-Rubin causal model, we posit the existence of potential outcomes for each unit $i$ under treatments $T$ is denoted by $y_i(T)$, where $T = [t_1,\cdots,t_n]$ is the treatment vector of all units.  
\begin{assumption}
\textbf{Network Interference}. 
\label{ass:neighborhood_interference}
Following \cite{aronow2017estimating, forastiere2021identification}, we assume that for any unit $i$,  $y_i(T) \!=\! y_i\!\left(t_i, G(T_{N(i)})\right)$, where $N(i)$ denote the neighborhood of unit $i$, and $T_{N(i)}$ denotes the collection of neighborhood treatments. The function $G: \{0,1\}^{|N(i)|} \rightarrow [0,1]$ is the exposure mapping function that summarizes the neighbors' treatment into a scalar.
\end{assumption}

We define the \textbf{peer exposure} as $z_i = G(T_{N(i)})$.
Previous work \cite{aronow2017estimating, forastiere2021identification} assume that the peer exposure is the proportion of the treated neighbors, that is, $z_i =  \frac{\sum_{j\in N(i)}t_j}{|N(i)|}$.
In this paper, we relax the assumption and assume that $z_i$ is a weighted sum of the neighbors' treatments:
\begin{eqnarray}
    z_i = \sum_{j\in N(i)}a_{ij}t_j \quad \st \sum_{j\in N(i)} a_{ij} = 1,
\end{eqnarray}
which means that the interference could be heterogeneous. 
Additionally, we posit that $a_{ij}$ can be portrayed by the social influence between connected units.

In this paper, we focus on estimating the individual treatment effect $\tau(x)$ given treatments $t_1$, $t_2$ and peer-exposure $z_1$, $z_2$ as follows:
\begin{eqnarray}
    \tau(x):= \mathbbm{E}[y(t_1,z_1) | x, X_{N(\cdot)}] - \mathbbm{E}[y(t_2,z_2) | x, X_{N(\cdot)}],
\end{eqnarray}
where $X_{N(\cdot)}$ denotes the collection of neighborhood covariates of a unit. The average treatment effect can be formulated as $ATE=\frac{1}{n}\sum_{i=1}^n \tau(x_i)$.
For simplicity, we use $\mathbf{X} \in \mathcal{X} \times \mathcal{X}^{|N|}$ to denote the covariates along with the neighbors' covariates, i.e., $\mathbf{X}_i:=\{x_i,X_{N(i)}\}$.
The causal effects can be identified with the following assumptions in the network scenario \cite{tchetgen2021auto}:
\begin{assumption}
\textbf{Markov property}.The outcome $y_i$ only depends on the covariates and treatment of the unit and its neighbors:
\begin{equation}
    y_i \Vbar x_j, t_j, y_j \mid \mathbf{X}_i, t_i, z_i, \quad j \in N(-i)
\end{equation}
where $N(-i)$ denotes the non-neighboring nodes of unit $i$.
\end{assumption}

\begin{assumption}
\textbf{Networked Unconfoundedness}. The treatment $t\in \mathcal{T}$, peer-exposure $z \in \mathcal{Z}$, potential outcomes $y(t,z)$ are independent given the joint covariates of a unit and its neighbors $\mathbf{X}$, i.e., $y(t,z) \Vbar \mathbf{X}$.
\end{assumption}

\begin{assumption}
\textbf{Overlap}. For any $\mathbf{X} \in \mathcal{X}\times\mathcal{X}^{N}$ such that $p(\mathbf{X})>0$, we have $0< p(t,z|\mathbf{X})<1$ for each $t\in \mathcal{T}$ and $z \in \mathcal{Z}$.
\end{assumption}

\begin{lemma}
\textbf{Identifiability of the treatment effect}. Under the assumption above, for any treatments $t_1$, $t_2$ and peer-exposure $z_1$, $z_2$, we have:
\begin{eqnarray}
    \mathbbm{E}[y(t_1,z_1)-y(t_2,z_2)|\mathbf{X}] = 
    \mathbbm{E}[y(t_1,z_1)|\mathbf{X}] - \mathbbm{E}[y(t_2,z_2)|\mathbf{X}].
\end{eqnarray}
which means that we can identify the treatment effect from the observational data.
\end{lemma}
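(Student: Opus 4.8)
The plan is to recognize that the displayed equality is, at face value, nothing more than the linearity of conditional expectation, so the real content I must supply is the identification of each counterfactual mean $\mathbbm{E}[y(t,z)\mid\mathbf{X}]$ from the observational distribution. First I would split the left-hand side by linearity into $\mathbbm{E}[y(t_1,z_1)\mid\mathbf{X}] - \mathbbm{E}[y(t_2,z_2)\mid\mathbf{X}]$, which already matches the right-hand side formally; the substantive work is to show that each term is a functional of observable quantities, i.e.\ that $\mathbbm{E}[y(t,z)\mid\mathbf{X}] = \mathbbm{E}[y\mid\mathbf{X}, T=t, Z=z]$.

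For a generic pair $(t,z)$ I would proceed in two steps. First, invoke Networked Unconfoundedness, which I read (following the accompanying text rather than the shorthand $y(t,z)\Vbar\mathbf{X}$) as the ignorability statement $(T,Z)\Vbar y(t,z)\mid\mathbf{X}$, to enrich the conditioning set without changing the expectation: $\mathbbm{E}[y(t,z)\mid\mathbf{X}] = \mathbbm{E}[y(t,z)\mid\mathbf{X}, T=t, Z=z]$. Second, apply the consistency embedded in the Network Interference assumption (Assumption~\ref{ass:neighborhood_interference}), namely that on the event $\{t_i=t,\ z_i = G(T_{N(i)}) = z\}$ the observed outcome coincides with the potential outcome, $y_i = y_i(t,z)$. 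This replaces the counterfactual by the factual outcome and yields $\mathbbm{E}[y(t,z)\mid\mathbf{X}] = \mathbbm{E}[y\mid\mathbf{X}, T=t, Z=z]$, which is estimable from data.

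Along the way I would use the Overlap assumption to guarantee that the conditioning event $\{\mathbf{X}, T=t, Z=z\}$ has positive probability, so each conditional expectation is well defined, and the Markov property to justify restricting attention to the local covariates $\mathbf{X}_i = \{x_i, X_{N(i)}\}$ rather than to the covariates of the entire network, losing no information about $y_i$. Substituting the identified expressions for both pairs $(t_1,z_1)$ and $(t_2,z_2)$ back into the difference then completes the argument.

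The main obstacle I anticipate is conceptual rather than computational: the consistency step needs care in the networked setting, because the peer exposure $z = G(T_{N(i)})$ is itself a function of the neighborhood treatment vector, so conditioning on $Z=z$ does not pin down $T_{N(i)}$ uniquely. I would need Assumption~\ref{ass:neighborhood_interference} precisely here, since it asserts that the outcome depends on the neighborhood only through the scalar $G(T_{N(i)})$; this ensures that every treatment configuration consistent with $(t,z)$ produces the same potential outcome, so the factual--counterfactual substitution is unambiguous.
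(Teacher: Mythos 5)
Your proposal is correct, but there is nothing in the paper to measure it against: the paper states this lemma bare, with no proof attached, so you have supplied the argument the authors leave entirely implicit. Your reading of the statement is the right one --- the displayed equality is nothing but linearity of conditional expectation and carries no causal content; the substantive claim is the closing sentence (identifiability from observational data), which requires exactly what you prove, namely $\mathbbm{E}[y(t,z)\mid\mathbf{X}]=\mathbbm{E}[y\mid\mathbf{X},T=t,Z=z]$ for each $(t,z)$. Your chain is the standard one from the interference literature: read the paper's shorthand $y(t,z)\Vbar\mathbf{X}$ as the ignorability statement $(T,Z)\Vbar y(t,z)\mid\mathbf{X}$ (the literal shorthand would be useless for identification, and the paper's prose supports your reading); use it to pass from $\mathbbm{E}[y(t,z)\mid\mathbf{X}]$ to $\mathbbm{E}[y(t,z)\mid\mathbf{X},T=t,Z=z]$; invoke consistency, which here is the Network Interference assumption $y_i(T)=y_i\left(t_i,G(T_{N(i)})\right)$ combined with the Neyman--Rubin convention that the observed outcome is the potential outcome at the observed assignment, to replace the counterfactual by the factual outcome; and use Overlap so the conditioning events are non-null and the Markov property so that conditioning on the local covariates $\mathbf{X}_i$ suffices in a dependent network. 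Your closing remark is also the genuinely delicate point: conditioning on $Z=z$ does not pin down the neighborhood treatment vector, and it is precisely the exposure-mapping assumption that makes $y(t,z)$ a well-defined random variable, so the factual--counterfactual substitution is unambiguous. In short, your proof is what a complete version of this lemma requires; the paper's lemma, taken literally and without your added identification step, asserts nothing beyond linearity.
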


In this paper we focus on estimating two specific treatment effects, which are individual level \textbf{direct treatment effect} $\mathbbm{E}[y(t=1,z)-y(t=0,z)|\mathbf{X}]$ and \textbf{spillover effect} $\mathbbm{E}[y(t=0,z)-y(t=0,z=0)|\mathbf{X}]$ from network observations.

\begin{figure*}[t]
    \centering
    \includegraphics[width=.8\linewidth]{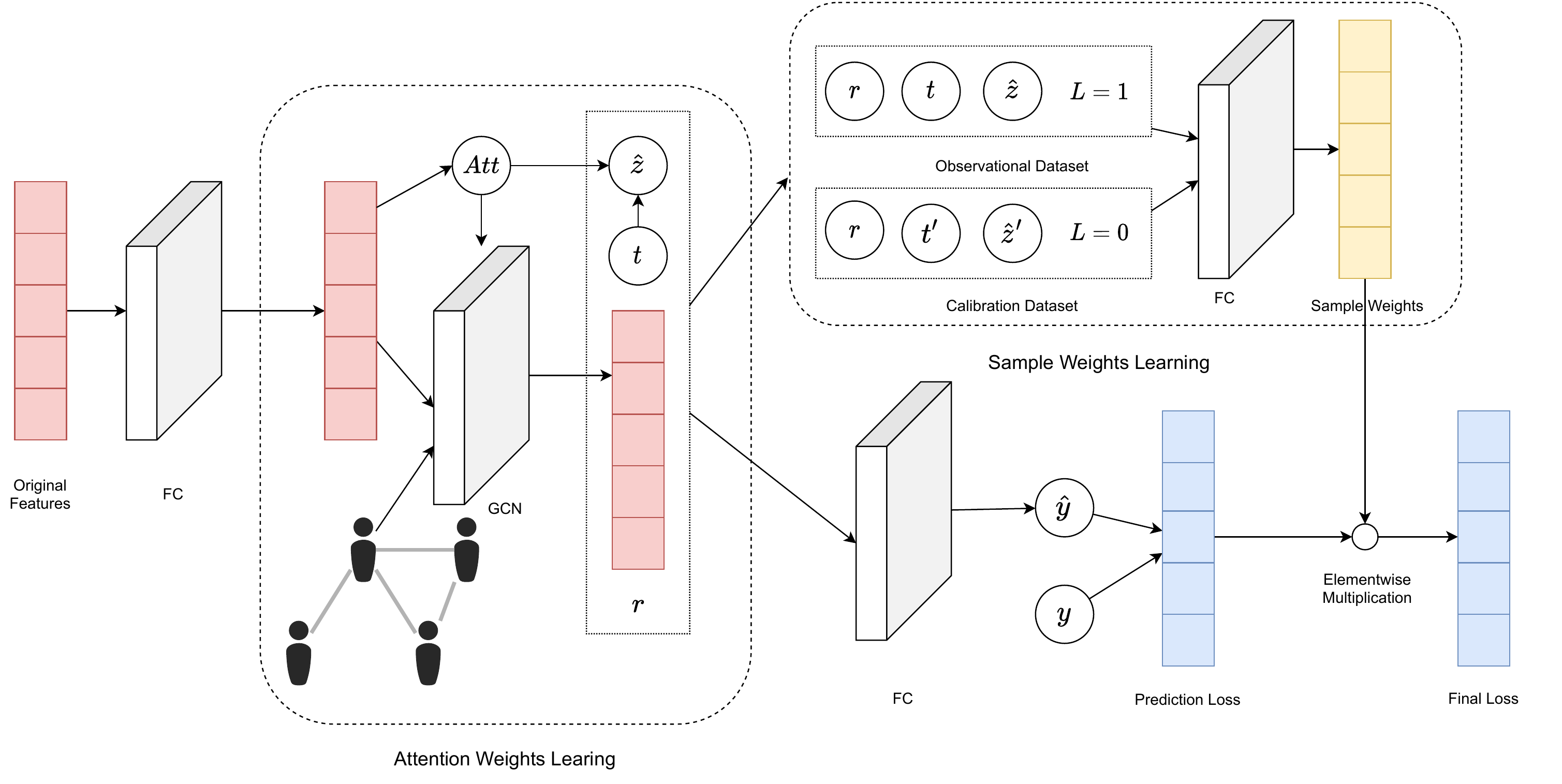}
    \caption{Overall framework of the proposed Dual Weighting Regression algorithm. The attention weights learning component is in section \ref{sec:att}. The sample weights learning component is in section \ref{sec:sr}. We formulate the entire learning process as a bi-level optimization problem. The detail of the bi-level optimization is in section \ref{sec:bi-level}.}
    \label{fig:model_structure}
\end{figure*}

\section{Dual Weighting Regression Algorithm}
In this section, we propose a novel algorithm named Dual Weighting Regression to solve the two challenges in learning individual treatment effects from networked observational data.

We first introduce the attention mechanism to capture the heterogeneous interference between individuals in the social network and use the learned attention weights to discriminately aggregate the neighboring information.
Then we elaborate on the counterfactual prediction problem in networked observational data in the presence of interference and propose a novel sample reweighting method to solve the confounding bias challenge.
To effectively coordinate attention weights and sample weights, we formulate the learning process as a bi-level optimization problem to alternately train the weighted regression model and sample weight learning model.
The overall algorithm is shown in Fig.\ref{fig:model_structure}. 

\subsection{Capturing Heterogeneous via Attention Weights}
\label{sec:att}
As previously mentioned, heterogeneity signifies the importance and influence of neighboring nodes. Therefore, discerning the varying significance of these neighboring nodes poses a substantial challenge. Drawing inspiration from studies that employed the attention mechanism in Graph Convolutional Networks \cite{lee2019attention}, we adopt the attention mechanism to effectively capture heterogeneity among nodes.

Firstly, we apply fully connected layers to the original features $x_i$ to derive the individual representation $h_i$, which characterizes the user interests for each unit $i$.
With the learned individual representation $h_i$, we calculate the similarity between each unit $i$ and all its neighbors in the following way:
\begin{eqnarray}
    \label{eq:attention}
    a_{ij} = \frac{exp(h_i^T h_j)}{\sum_{k\in N(i)} exp(h_i^T h_k)}.
\end{eqnarray}
With the attention weights that measure the influence of friend $j$ on unit $i$, we can summarize the peer exposure as:
\begin{eqnarray}
\label{eq:peer_exposure}
    \hat{z}_i = \sum_{j\in N(i)} a_{ij} t_j .
\end{eqnarray}
At the same time, to resolve the problem that traditional GCN methods aggregate the features without distinguishing the different influences of the neighboring nodes, we utilize the attention weights to differentially aggregate the neighboring information.
With a self-connection edge that preserves a unit's interest, the representation of the unit $i$ can be eventually measured by:
\begin{eqnarray}
    \label{eq:representation}
    \phi(x_i,X_{N(i)}) = \sigma\left(\sum_{j\in N(i) \cup i} a_{ij} h_{j}\right)
\end{eqnarray}
where $\sigma(\cdot)$ denotes the activation function, and $\phi$ is the representation functions of the form $\phi : \mathcal{X} \times \mathcal{X}^{|N|} \rightarrow \mathcal{R}$. We use $r_i$ to denote the aggregated representation of unit $i$, $r_i:=\phi(x_i,X_{N(i)})$.

\subsection{Deconfounding via Sample Weights}
\label{sec:sr}
In this section, we present a method for learning sample weights to address the confounding bias in estimating individual treatment effects. This approach extends beyond traditional causal inference methods, often focusing on single-treatment scenarios or correlations between two components. Our method is developed to handle the complexities of simultaneously considering three components: representation $r$, treatment $t$, and peer exposure $z$. The essence of our method is the creation of a calibration distribution coupled with point-wise density ratio estimation. This integration is crucial as it ensures that our optimization objectives align the optimization objective with the calibration distribution's properties, enabling precise counterfactual prediction.

As shown in Fig.\ref{fig:interference}, to precisely estimate the individual treatment effects, we should eliminate the confounding bias by removing the correlation between the representation $r$, treatment $t$, and the peer exposure ${z}$. The key is to create a calibration distribution satisfying that $p(r,t,z)=p(r)p(t)p(z)$ while preserving the marginal distribution of each component. Inspired by the previous work \cite{li2020continuous}, we first construct a calibration dataset where $r$, $t$, and ${z}$ are independent of each other. 

The calibration datasets can be obtained by randomly shuffling the values of $t$ and $z$ in the observational dataset.
In the generated calibration dataset, the shuffled treatments $t^{\prime}$ and peer-exposures $z^{\prime}$ become independent with the representation $r$ and each other. Meanwhile, the permuted $t^{\prime}$ and $z^{\prime}$ preserve the marginal distribution of the original dataset. With the calibration dataset, the sample weights are learned as density ratio by comparing the observational distribution $\mathbf{D}_{obs}\{r, z, t\}$ and the calibration distribution $\mathbf{D}_{cal}\{r, z^{\prime}, t^{\prime} \}$.

We set the data points from the observational datasets $\mathbf{D}_{obs}$ as positive samples ($L=1$) and the data points from the calibration datasets $\mathbf{D}_{cal}$ as negative samples ($L=0$). By fitting a neural network $\pi$ to estimate $p(L|r, t, z)$, we can obtain the sample weights by:
\begin{eqnarray}
    \label{eq:one2one}
    W(r,t,z) = \frac{p(r)p(t)p(z)}{p(r,t,z)} 
    = \frac{p(r,t,z | L=0)}{p(r,t,z|L=1)} \\
    = \frac{p(L=1)}{p(L=0)}\cdot \frac{p(L=0|r,t,z)}{p(L=1|r,t,z)}
    = \frac{p(L=0|r,t,z)}{p(L=1|r,t,z)},
\end{eqnarray}
where $\frac{p(L=1)}{p(L=0)} = 1$ since the same sample size of the observational dataset and the calibration dataset.
The learned sample weights can be used to resolve the confounding bias problem in regressing the outcome.
$\pi$ is optimized in the following way:
\begin{eqnarray}
    \label{eq:loss_weights}
    \mathcal{L}_{\pi} = CrossEntropy(\pi(r_i,t_i,z_i),L_i)
\end{eqnarray}
The sample weight of unit $i$ can be obtained by:
\begin{eqnarray}
    \label{eq:sample_weights}
    W(r_i,t_i,z_i) = \frac{1-\pi(r_i,t_i,z_i)}{\pi(r_i,t_i,z_i)}.
\end{eqnarray}
With the learned weights, we try to regress the outcome by deep neural network $f: \mathcal{R}\times \mathcal{T} \times \mathcal{Z} \rightarrow \mathbbm{R}$ in the following way:
\begin{eqnarray}
    \label{eq:regression}
    \mathcal{L}_{f,\phi} = \frac{1}{n}\sum_{i=0}^{n}(f(r_i, z_i, t_i)-y_i)^2 W(r_i,t_i,z_i)
\end{eqnarray}
With the sample weights that resolve the confounding bias, we can accurately estimate the counterfactual outcome and the treatment effect from the neural network.

 \subsection{Bi-level Optimization}
\label{sec:bi-level}
In the process of learning sample weights, we utilize the dataset $\{r_i, t_i, \hat{z}_i\}_{i=1}^n$ as our observational basis, while generating a calibration dataset $\{r_i, t^{\prime}_i, \hat{z}^{\prime}_i\}_{i=1}^n$ through random shuffling of treatment $t$ and heterogeneous peer exposure $\hat{z}$. To learn the sample weights, we fit the model $\pi_{\mu}={p}(L|r, t, \hat{z})$. This process requires dynamically updating $r$ and $\hat{z}$ at each step to refine the sample weights. Concurrently, we need to dynamically learn these sample weights to alleviate confounding bias in the regression model update.

This scenario presents a `Chicken and Egg' dilemma: the regression models $f$ and $\phi$ are fitted using weights generated by the network $\pi$, yet, simultaneously, the model $\pi$ depends on the datasets generated by $f$ and $\phi$. To address this complex interdependency, we frame the learning process as a bi-level optimization problem:
\begin{eqnarray}
    \mathop{\min}\limits_{f,\phi} \mathcal{L}_{f,\phi} \\
    s.t. \quad \pi = \mathop{\min}\limits_{\pi} \mathcal{L_{\pi}}
\end{eqnarray}
In this formulation, the upper level aims to minimize the weighted loss of the outcome, while the lower level focuses on optimizing the sample weights to eliminate confounding bias. However, due to the intertwined nature of these objectives, direct optimization is not straightforward. We tackle this challenge by iteratively optimizing the sample weight learning network and the outcome regression network, thereby seeking an approximate solution to the original intertwined problem.

The overall Dual Weighting Regression algorithm is summarized in Algorithm \ref{alg:weighted_regression}.

\renewcommand{\algorithmicrequire}{\textbf{Input:}}  
\renewcommand{\algorithmicensure}{\textbf{Output:}} 
\begin{algorithm}[t]
    \caption{Dual Weighting Regression}
    \label{alg:weighted_regression}
    \begin{algorithmic}[1]
    \Require
        $x_1,\cdots,x_n$: Observed covariates; 
        $t_1,\cdots,t_n$: Treatment variable; 
        $y_1,\cdots,y_n$: Outcome; 
        $A$: Adjacency matrix; 
    \Ensure
        $f_\theta$: the regression model
    \Repeat
    \State
        Apply fully connected layers to the $x_i$ to obtain $h_i$.
    \State
        Obtain the attention score $a_{ij}$ between connected units $i$ and $j$ by eq.\ref{eq:attention}.
    \State
        Summarize the peer exposure $z_i$ by eq.\ref{eq:peer_exposure}.
    \State
        Obtain the representation $r_i$ as eq.\ref{eq:representation}.
    \State
        Generate sample weights by Algorithm \ref{alg:weight_learning}.
    \State
        Optimize the regression model by eq.\ref{eq:regression}.
    \Until{(Max Epoch Reach)}
    \end{algorithmic}
\end{algorithm}

\begin{algorithm}[t]
    \caption{Sample Weights Learning}
    \label{alg:weight_learning}
    \begin{algorithmic}[1]
    \Require
        $\{r,t,\hat{z}\}$: observational dataset;
        $\pi$: sample weights learning model
    \Ensure
        $W(r,t,\hat{z})$: sample weights
    \Repeat
    \State
        Random shuffle $t$ and $\hat{z}$ to obtain the calibration dataset $\{r, t^{\prime}, \hat{z}^{\prime}\}$.
    \State
        Label the observational dataset as positive samples ($L=1$) and the calibration dataset as negative samples ($L=0$).
    \State
        Train the binary classification model $\pi$ with eq.\ref{eq:loss_weights}. 
    \Until{(Max Epoch Reach)}
    \State Obtain the sample weights $W(r,t,\hat{z})$ by eq.$\ref{eq:sample_weights}$.
    \State \Return $W(r,t,\hat{z})$
    \end{algorithmic}
\end{algorithm}

\section{Theoretical results}
In this section, we upper bound the individual treatment effects estimation error based on the weighted factual error term and the distribution distance between the weighted observational distribution and the fully random distribution.

For simplicity, we use $\mathbf{X} \in \mathcal{X} \times \mathcal{X}^{|N|}$ to denote the covariates along with the neighbor's covariates.
At the same time, following the previous literature on representation learning \cite{shalit2017estimating,bellot2022generalization}, we assume that the representation function $\phi$ is a twice differentiable, one-to-one function.

Then the loss function can be defined as follows:
\begin{equation}
    l_{f,\phi}(\mathbf{X}, t, z) = \int_\mathcal{Y} L(y(t, z), f(r, t, z))p(y(t, z) | \mathbf{X})dy(t,z)
\end{equation}
We define the expected factual and counterfactual losses of $f$ at treatment $t \in \mathcal{T}$ and $z \in \mathcal{Z}$ as:
\begin{eqnarray}
    \epsilon_F(t,z) = \mathbbm{E}_{\mathbf{X} \sim p(\mathbf{X}|t,z)}[l_{f,\phi}(\mathbf{X}, t, z)]
\end{eqnarray}
\begin{equation}
    \epsilon_{CF}(t,z) = \mathbbm{E}_{\mathbf{X} \sim p(\mathbf{X})}[l_{f,\phi}(\mathbf{X}, t, z)]
\end{equation}
and the expected factual and counterfactual errors in the following way:
\begin{equation}
\epsilon_F = \mathbbm{E}_{\mathbf{X},t,z \sim p(\mathbf{X},t,z)}[l_{f,\phi}(\mathbf{X}, t, z)]
\end{equation}
\begin{equation}
\epsilon_{CF} = \mathbbm{E}_{\mathbf{X},t,z \sim p(\mathbf{X})p(t)p(z)}[l_{f,\phi}(\mathbf{X}, t, z)]
\end{equation}

The weighted factual error can be defined as:
\begin{equation}
    \epsilon_F^w = \mathbbm{E}_{\mathbf{X},t,z \sim p(\mathbf{X},t,z)}[l_{f,\phi}(\mathbf{X}, t, z)W(r, t, z)]
\end{equation}

Following the theoretical analysis of \cite{zou2020counterfactual}, we have an upper bound of $\epsilon_{CF}$ based on $\epsilon_F^w$ and Integral Probability Metric (IPM), which measures the distance between two distributions. More specific, for two distribution on $\mathcal{X} \times \mathcal{X}^{N} \times \mathcal{T} \times \mathcal{Z}$, $p_1(\mathbf{X},T,Z)$ and $p_2(\mathbf{X},T,Z)$, and a family $G$ of functions $g: \mathcal{X} \times \mathcal{X}^{N} \times \mathcal{T} \times \mathcal{Z} \rightarrow \mathbbm{R}$, we have:
\begin{eqnarray}
   \nonumber &&IPM_G(p_1(\mathbf{X},t,z), p_2(\mathbf{X},t,z)) \\
   \nonumber && = sup_{g\in G}|\int_{\mathbf{X}}\int_T\int_Z (p_1(\mathbf{X},t,z) - p_2(\mathbf{X},t,z))g(\mathbf{X},T,Z)d \mathbf{X} dT dZ|.
\end{eqnarray}
\begin{theorem}
\label{thm:cf_upper}
Assuming a family $G$ of functions $g$: $\mathcal{X} \times \mathcal{X}^{N} \times \mathcal{T} \times \mathcal{Z} \rightarrow \mathbbm{R}$, the loss function $\mathcal{L}(f(r,t,z),y(\mathbf{X},t,z))\in G$, we have:
\begin{eqnarray}
    \epsilon_{CF} \leq \epsilon_F^w + IPM_G(W(r,T,Z)p(\mathbf{X},t,z), p(\mathbf{X})p(t)p(z)).
\end{eqnarray}
\end{theorem}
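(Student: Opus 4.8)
The plan is to write both $\epsilon_{CF}$ and $\epsilon_F^w$ as integrals against measures on $\mathcal{X}\times\mathcal{X}^{N}\times\mathcal{T}\times\mathcal{Z}$, subtract them, and recognize the resulting expression as an integral of the (in-class) loss against a signed measure, which is exactly the object the IPM controls. This mirrors the structure of the counterfactual-error bound in \cite{zou2020counterfactual}.

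First I would unfold the definitions. By the definitions of the counterfactual and weighted factual errors,
\[
\epsilon_{CF} = \int_{\mathbf{X}}\int_T\int_Z p(\mathbf{X})p(t)p(z)\, l_{f,\phi}(\mathbf{X},t,z)\, d\mathbf{X}\,dt\,dz,
\]
\[
\epsilon_F^w = \int_{\mathbf{X}}\int_T\int_Z p(\mathbf{X},t,z)\, W(r,t,z)\, l_{f,\phi}(\mathbf{X},t,z)\, d\mathbf{X}\,dt\,dz,
\]
where the outcome $y$ has already been integrated out inside $l_{f,\phi}$. Subtracting gives
\[
\epsilon_{CF}-\epsilon_F^w = \int_{\mathbf{X}}\int_T\int_Z \big(p(\mathbf{X})p(t)p(z) - W(r,t,z)p(\mathbf{X},t,z)\big)\, l_{f,\phi}(\mathbf{X},t,z)\, d\mathbf{X}\,dt\,dz.
\]

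Second I would bound this difference. Taking absolute values and using the hypothesis that the loss belongs to the function class $G$ (here the one-to-one assumption on $\phi$ is used, so that a function of $(r,t,z)$ is equivalently a function of $(\mathbf{X},t,z)$), the displayed integral is at most the supremum of the analogous integral over all $g\in G$, which is precisely $IPM_G\big(W(r,T,Z)p(\mathbf{X},t,z),\,p(\mathbf{X})p(t)p(z)\big)$; the order of the two arguments is immaterial because of the outer absolute value in the definition of the IPM. Since $\epsilon_{CF}-\epsilon_F^w \le |\epsilon_{CF}-\epsilon_F^w|$, rearranging yields the claimed bound.

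The computation is essentially bookkeeping once the two errors share a common reference measure, so the one delicate point is the membership $l_{f,\phi}\in G$. The hypothesis is phrased for the pointwise loss $\mathcal{L}(f(r,t,z),y)$, whereas the integral step needs the $y$-averaged loss $l_{f,\phi}(\mathbf{X},t,z)=\int_{\mathcal{Y}} L(y,f)\,p(y\mid\mathbf{X})\,dy$ to lie in $G$; I expect this to be the main (if minor) obstacle, and I would handle it by either assuming $G$ is closed under such averaging or restating the hypothesis directly for $l_{f,\phi}$. A secondary check is that $W(r,t,z)p(\mathbf{X},t,z)$ is a genuine probability density, which follows from $W$ being the density ratio $p(r)p(t)p(z)/p(r,t,z)$, guaranteeing that the IPM is well defined.
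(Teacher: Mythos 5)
Your proposal is correct and takes essentially the same route as the paper's own proof: both express $\epsilon_{CF}-\epsilon_F^w$ as an integral of the loss against the signed measure $p(\mathbf{X})p(t)p(z)-W\,p(\mathbf{X},t,z)$, pass to absolute values, and bound by the supremum over $g\in G$, which is by definition the $IPM_G$ term. The caveat you flag --- that the hypothesis is stated for the pointwise loss $\mathcal{L}(f(r,t,z),y)$ while the argument needs the $y$-averaged loss $l_{f,\phi}(\mathbf{X},t,z)$ to lie in $G$ --- is a real imprecision that the paper's proof also glosses over, so your treatment is, if anything, slightly more careful.
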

\begin{equation}
    \epsilon_{CF} \leq \epsilon_F^w + IPM_G(W(\mathbf{X},T,Z)p(\mathbf{X},t,z), p(\mathbf{X})p(t)p(z)).
\end{equation}
\begin{proof}
\begin{eqnarray}
\nonumber && \epsilon_{CF} - \epsilon_{F}^w \\
\nonumber  &=&\int_{\mathbf{X}} \int_{{T}} \int_{{Z}}(p(\mathbf{X})p(t)p(z) - W(\mathbf{X},T,Z) p(\mathbf{X},T,Z))\mathcal{L}(f(\mathbf{X},T,Z),y(\mathbf{X},T,Z) \\
\nonumber  &\leq& | \int_{\mathbf{X}} \int_{{T}} \int_{{Z}}(p(\mathbf{X})p(T)p(Z) - W(\mathbf{X},T,Z) p(\mathbf{X},T,Z))\mathcal{L}(f(\mathbf{X},T,Z),y(\mathbf{X},T,Z) | \\
\nonumber  &\leq& \mathop{\sup}\limits_{g \in G} | \int_{\mathbf{X}} \int_{{T}} \int_{{Z}}(p(\mathbf{X})p(T)p(Z) - W(\mathbf{X},T,Z) p(\mathbf{X},T,Z))g(\mathbf{X},T,Z)| \\
\nonumber &=& IPM_G(W(\mathbf{X},T,Z)p(\mathbf{X},T,Z), p(\mathbf{X})p(T)p(Z))
\end{eqnarray}
when $W(\mathbf{X},T,Z)=p(\mathbf{X})p(T)p(Z)/p(\mathbf{X},T,Z)$ we have:
\begin{eqnarray}
\nonumber && IPM_G(W(\mathbf{X},T,Z)p(\mathbf{X},T,Z), p(\mathbf{X})p(T)p(Z)) \\
\nonumber  &=& \mathop{\sup}\limits_{g \in G} | \int_{\mathbf{X}} \int_{{T}} \int_{{Z}}(p(\mathbf{X})p(T)p(Z) - \frac{p(\mathbf{X})p(T)p(Z)}{p(\mathbf{X},T,Z)} p(\mathbf{X},T,Z))g(\mathbf{X},T,Z)|\\
\nonumber &=& 0
\end{eqnarray}
and $\epsilon_{CF} = \epsilon_{F}^w$.
\end{proof}

From theorem.\ref{thm:cf_upper}, we know that the counterfactual error can be upper bounded by the weighted factual error with an IPM term. 
Without controlling the confounding bias, direct regression on the outcome cannot guarantee the performance on counterfactual estimation since the $IPM_G$ term could be extremely high.

\begin{definition}
\textbf{Treatment effect and error for selected treatments $t_1$, $t_2$ and selected peer-exposure $z_1$, $z_2$}. We define the treatment effect between treatments $t_1$, $t_2$ and peer-exposure $z_1$, $z_2$ as:
\begin{eqnarray}
    \tau_{(t_1,z_1),(t_2,z_2)}(\mathbf{X})= m(\mathbf{X},t_1,z_1) - m(\mathbf{X},t_2,z_2),
\end{eqnarray}
where $m(\mathbf{X},t,z):=\mathbbm{E}[y(t,z)|\mathbf{X}]$. 

We define its estimate given a prediction function $f$ and representation function $\phi$ by,
\begin{eqnarray}
    \hat{\tau}_{(t_1,z_1),(t_2,z_2)}(\mathbf{X})= f(r,t_1,z_1) - f(r,t_2,z_2).
\end{eqnarray}
The error in treatment effect estimation is defined as:
\begin{eqnarray}
    \nonumber \epsilon_{(t_1,z_1),(t_2,z_2)}(f,\phi) := 
    \int_{\mathbf{X}} \left(\hat{\tau}_{(t_1,z_1),(t_2,z_2)}(\mathbf{X}) - \tau_{(t_1,z_1),(t_2,z_2)}(\mathbf{X})\right)^2 p(\mathbf{X})d\mathbf{X}
\end{eqnarray}
\end{definition}

\begin{lemma}
\label{lemma:1}
The following derivation holds:
\begin{eqnarray}
    \nonumber && \epsilon_{CF}(t,z) = \int_{\mathbf{X}} l_f(\mathbf{X},t,z) p(\mathbf{X}) d\mathbf{X} \\
    \nonumber &=& \int_{\mathbf{X}}\int_{\mathcal{Y}} (y(t,z)-f(\mathbf{X},t,z))^2 p(y(t, z) | \mathbf{X})p(\mathbf{X})dy(t,z)d\mathbf{X} \\
    \nonumber &=& \int_{\mathbf{X}}\int_{\mathcal{Y}} (f(\mathbf{X},t,z)-m(\mathbf{X},t,z))^2 p(y(t, z) | \mathbf{X})p(\mathbf{X})dy(t,z)d\mathbf{X} \\
    \nonumber &+& \int_{\mathbf{X}}\int_{\mathcal{Y}} (m(\mathbf{X},t,z)-y(t,z))^2 p(y(t, z) | \mathbf{X})p(\mathbf{X})dy(t,z)d\mathbf{X} \\
    \nonumber &+& 2 \int_{\mathbf{X}}\int_{\mathcal{Y}} (m(\mathbf{X},t,z)-y(t,z))(f(\mathbf{X},t,z)-m(\mathbf{X},t,z)) p(y(t, z) | \mathbf{X})p(\mathbf{X})dy(t,z)d\mathbf{X} \\
    \nonumber &=& \int_{\mathbf{X}}\int_{\mathcal{Y}} (f(\mathbf{X},t,z)-m(\mathbf{X},t,z))^2 p(y(t, z) | \mathbf{X})p(\mathbf{X})dy(t,z)d\mathbf{X} + \sigma_{Y(t,z)}
\end{eqnarray}
\end{lemma}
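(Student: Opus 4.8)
The plan is to recognize Lemma~\ref{lemma:1} as the standard bias--variance decomposition of the squared counterfactual loss, where the key mechanism is that the cross term integrates to zero by the defining property of $m(\mathbf{X},t,z)=\mathbbm{E}[y(t,z)\mid\mathbf{X}]$. The statement is really a chain of equalities, so I would establish each equality in turn, starting from the definition and ending at the noise constant $\sigma_{Y(t,z)}$.

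First I would unfold the definitions: starting from $\epsilon_{CF}(t,z)=\mathbbm{E}_{\mathbf{X}\sim p(\mathbf{X})}[l_{f,\phi}(\mathbf{X},t,z)]$ and substituting the squared-error loss $L$, the inner loss becomes $\int_{\mathcal{Y}}(y(t,z)-f(r,t,z))^2\, p(y(t,z)\mid\mathbf{X})\,dy(t,z)$, which yields the second displayed line. Since $\phi$ is assumed twice differentiable and one-to-one, $r=\phi(\mathbf{X})$ determines $\mathbf{X}$ and conversely, so I may write $f(r,t,z)=f(\mathbf{X},t,z)$ without loss; this justifies the notation used in the statement.

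Next I would insert $m(\mathbf{X},t,z)$ by writing $y-f=(m-y)+(f-m)$ and expanding the square, producing the three integrals shown: the squared bias term in $(f-m)^2$, the squared-noise term in $(m-y)^2$, and the cross term $2(m-y)(f-m)$. The crucial step is that the cross term vanishes. Fixing $\mathbf{X}$, the factor $\bigl(f(\mathbf{X},t,z)-m(\mathbf{X},t,z)\bigr)$ is constant in $y$ and pulls out of the inner integral, and the remaining factor $\int_{\mathcal{Y}}(m(\mathbf{X},t,z)-y(t,z))\,p(y(t,z)\mid\mathbf{X})\,dy(t,z)$ equals $m(\mathbf{X},t,z)-\mathbbm{E}[y(t,z)\mid\mathbf{X}]=0$ by the definition of $m$. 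Integrating out $\mathbf{X}$ then annihilates the entire cross term, collapsing the five-line expression to the final two surviving terms.

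Finally I would identify the leftover noise integral $\int_{\mathbf{X}}\int_{\mathcal{Y}}(m(\mathbf{X},t,z)-y(t,z))^2\, p(y(t,z)\mid\mathbf{X})\,p(\mathbf{X})\,dy(t,z)\,d\mathbf{X}$ with $\sigma_{Y(t,z)}$, since this is precisely the expected conditional variance of the potential outcome, a constant independent of $f$ and $\phi$. The computation is entirely routine, so I do not anticipate a genuine obstacle; the only points requiring care are the one-to-one assumption on $\phi$ that licenses treating $f$ as a function of $\mathbf{X}$, and the measure-theoretic justification for pulling $(f-m)$ out of the $y$-integral and exchanging the order of integration, both of which hold under the stated regularity and overlap assumptions.
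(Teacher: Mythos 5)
Your proposal is correct and follows essentially the same route as the paper: unfold the definition with the squared-error loss, expand around $m(\mathbf{X},t,z)=\mathbbm{E}[y(t,z)\mid\mathbf{X}]$, kill the cross term by pulling $(f-m)$ out of the inner integral and using $\int_{\mathcal{Y}}(m-y)\,p(y(t,z)\mid\mathbf{X})\,dy(t,z)=0$, and identify the residual integral with the variance term $\sigma_{Y(t,z)}$. The only blemish is the harmless sign slip ``$y-f=(m-y)+(f-m)$'' (the right-hand side equals $f-y$), which is immaterial once both sides are squared.
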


\begin{theorem}
\label{thm:effect_upper}
The generalization bound in treatment effect estimation for selected treatments $t_1$, $t_2$ and selected peer-exposure $z_1$, $z_2$:
\begin{eqnarray}
    \nonumber &&\epsilon_{(t_1,z_1),(t_2,z_2)}(f,\phi) \\
    \nonumber &&\leq \Scale[0.9]{2\epsilon_F^w(t_1,z_1) +  2 IPM_G \left(W(r,t_1,z_1)p(\mathbf{X},t_1,z_1), p(\mathbf{X})p(t_1)p(z_1) \right)} \\
    \nonumber &&+ \Scale[0.9]{2\epsilon_F^w(t_2,z_2) +  2 IPM_G \left(W(r,t_2,z_2)p(\mathbf{X},t_2,z_2), p(\mathbf{X})p(t_2)p(z_2) \right)} \\
    \nonumber &&- \Scale[0.9]{2\sigma_{Y(t_1,z_1)}(p(\mathbf{X})) - 2\sigma_{Y(t_2,z_2)}(p\mathbf{X})}
\end{eqnarray}
where $\sigma_{Y(t,z)}(p(\mathbf{X}))$ is the variance of the random variables $Y(t,z)$ under distribution $p(\mathbf{X})$.
\end{theorem}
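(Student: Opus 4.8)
The plan is to split the treatment-effect error into two single-arm counterfactual errors and then apply the counterfactual bound of Theorem~\ref{thm:cf_upper} to each arm separately. First I would introduce the per-arm residuals $A(\mathbf{X}) = f(r,t_1,z_1) - m(\mathbf{X},t_1,z_1)$ and $B(\mathbf{X}) = f(r,t_2,z_2) - m(\mathbf{X},t_2,z_2)$. By the definitions of $\tau_{(t_1,z_1),(t_2,z_2)}$ and $\hat{\tau}_{(t_1,z_1),(t_2,z_2)}$, the integrand of $\epsilon_{(t_1,z_1),(t_2,z_2)}(f,\phi)$ is exactly $(\hat{\tau}-\tau)^2 = (A-B)^2$, so the error factorizes into a difference of residuals at the two treatment--exposure levels.

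Next I would apply the elementary bound $(A-B)^2 \le 2A^2 + 2B^2$ (a consequence of $(A-B)^2 + (A+B)^2 = 2A^2 + 2B^2$) pointwise in $\mathbf{X}$ and integrate against $p(\mathbf{X})$, giving
\[
\epsilon_{(t_1,z_1),(t_2,z_2)}(f,\phi) \le 2\int_{\mathbf{X}} A(\mathbf{X})^2 p(\mathbf{X}) d\mathbf{X} + 2\int_{\mathbf{X}} B(\mathbf{X})^2 p(\mathbf{X}) d\mathbf{X}.
\]
Each integral is the mean-squared deviation of $f$ from the conditional mean $m$ at a fixed $(t,z)$. Because the squared residual does not depend on $y$ and $p(y(t,z)\mid\mathbf{X})$ integrates to one, Lemma~\ref{lemma:1} identifies these integrals with $\epsilon_{CF}(t_1,z_1) - \sigma_{Y(t_1,z_1)}(p(\mathbf{X}))$ and $\epsilon_{CF}(t_2,z_2) - \sigma_{Y(t_2,z_2)}(p(\mathbf{X}))$ respectively.

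Finally I would invoke the single-arm analog of Theorem~\ref{thm:cf_upper}, namely $\epsilon_{CF}(t_i,z_i) \le \epsilon_F^w(t_i,z_i) + IPM_G(W(r,t_i,z_i)p(\mathbf{X},t_i,z_i), p(\mathbf{X})p(t_i)p(z_i))$, substitute it into the two mean-squared-error terms, and collect the $-2\sigma_{Y(t_i,z_i)}$ contributions to recover the stated inequality. The step I expect to be the main obstacle is justifying this per-$(t,z)$ version of the counterfactual bound: Theorem~\ref{thm:cf_upper} is proved for the joint error $\epsilon_{CF}$ integrated over all $(t,z)$, so I would rerun its argument with $t,z$ held fixed and, in particular, check that the two measures appearing in the IPM, the reweighted $W(r,t_i,z_i)p(\mathbf{X},t_i,z_i)$ and the product $p(\mathbf{X})p(t_i)p(z_i)$, are normalized consistently so that their difference integrated against the loss reproduces $\epsilon_{CF}(t_i,z_i) - \epsilon_F^w(t_i,z_i)$. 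A minor accompanying check is that the loss $\mathcal{L}(f(r,t,z),y)$ belongs to the class $G$, which is what lets the supremum defining $IPM_G$ dominate the signed integral, exactly as in the proof of Theorem~\ref{thm:cf_upper}.
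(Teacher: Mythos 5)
Your proposal is correct and follows essentially the same route as the paper's proof: decompose $\hat{\tau}-\tau$ into the two per-arm residuals, apply $(x+y)^2 \leq 2(x^2+y^2)$, convert each mean-squared residual into $\epsilon_{CF}(t_i,z_i)-\sigma_{Y(t_i,z_i)}$ via Lemma~\ref{lemma:1}, and then bound each $\epsilon_{CF}(t_i,z_i)$ by the weighted factual error plus the IPM term. The one point you flag as a potential obstacle --- that Theorem~\ref{thm:cf_upper} is stated for the joint error and must be rerun at fixed $(t,z)$ with consistent normalization of the two measures --- is real, but the paper itself glosses over it by simply citing ``Theorem 1'' at that step, so your treatment is if anything more careful than the original.
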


\begin{proof}
We define that $m(\mathbf{X},t,z) := \mathbbm{E}[y(t,z)|\mathbf{X}]$.

\begin{eqnarray}
    \nonumber &&\epsilon_{(t_1,z_1),(t_2,z_2)}(f) \\
    \nonumber &=& \Scale[0.9]{\int_{\mathbf{X}} (f(t_1,z_1,\mathbf{X})-m(t_1,z_1,\mathbf{X})+ f(t_2,z_2,\mathbf{X})-m(t_2,z_2,\mathbf{X}))p(\mathbf{X}))^2 d\mathbf{X}}\\
    \nonumber &\leq& \Scale[0.9]{2\int_{\mathbf{X}} (f(t_1,z_1,\mathbf{X})-m(t_1,z_1,\mathbf{X})) p(\mathbf{X}))^2 d\mathbf{X}} \\
    \nonumber &+& \Scale[0.9]{2\int_{\mathbf{X}}(f(t_2,z_2,\mathbf{X})-m(t_2,z_2,\mathbf{X}))p(\mathbf{X}))^2 d\mathbf{X}} \\
    \nonumber &=& 2(\epsilon_{CF}(t_1,z_1)-\sigma_{Y(t_1,z_1)}) + 2(\epsilon_{CF}(t_2,z_2)-\sigma_{Y(t_2,z_2)})\\
    \nonumber &\leq& 2\epsilon_F^w(t_1,z_1) +  2 IPM_G(W(\mathbf{X},t_1,z_1)p(\mathbf{X},t_1,z_1), p(\mathbf{X})p(t_1)p(z_1)) \\
    \nonumber &+& 2\epsilon_F^w(t_2,z_2) +  2 IPM_G(W(\mathbf{X},t_2,z_2)p(\mathbf{X},t_2,z_2), p(\mathbf{X})p(t_2)p(z_2)) \\
    \nonumber &-& 2\sigma_{Y(t_1,z_1)} - 2\sigma_{Y(t_2,z_2)}
\end{eqnarray}
\end{proof}
The first inequality holds since $(x+y)^2 \leq 2(x^2+y^2)$, the second equality holds by Lemma \ref{lemma:1}, and the second inequality holds by Theorem 1.
The error for individual treatment effect estimation is bounded by the weighted factual error term and the IPM between the weighted observational distribution and fully random distribution. 
The learned sample weights in the DWR algorithm satisfy that $W(r,t,z) \approx \frac{p(r)p(t)p(z)}{p(r,t,z)}$, with the assumption that the representation function is a 
twice differentiable, one-to-one function, we have $W(r,t,z) \approx \frac{p(\mathbf{X})p(t)p(z)}{p(\mathbf{X},t,z)}$.
With confounding bias successfully removed ($IPM \approx 0$), the proposed method can well bound the treatment effects estimation error by optimizing $\epsilon_F^w$.

 
\begin{table*}
\centering
\caption{Results on both direct and spillover effects estimation. We report the $\sqrt{\epsilon_{PEHE}^{DE}}$, $\epsilon_{MAE}^{DE}$, $\sqrt{\epsilon_{PEHE}^{SE}}$, and $\epsilon_{MAE}^{SE}$. The best-performing method is bolded. The "N/A" represents that the method does not consider the spillover effect.}
\label{tab:main_res}
\scalebox{0.9}
{
\resizebox{\linewidth}{!}
{
\begin{tabular}{ccccccccc}
    \hline
        \multicolumn{9}{c}{Flickr} \\ \hline
        & \multicolumn{2}{c}{$\sqrt{\epsilon_{PEHE}^{DE}}$}  &\multicolumn{2}{c}{$\epsilon_{MAE}^{DE}$} & \multicolumn{2}{c}{$\sqrt{\epsilon_{PEHE}^{SE}}$} & \multicolumn{2}{c}{$\epsilon_{MAE}^{SE}$} \\ \hline
        ~ & within-sample & out-of-sample & within-sample & out-of-sample & within-sample & out-of-sample & within-sample & out-of-sample \\ \hline
        CFR & 5.60(0.43) & 5.53(0.44) & 3.59(0.26) & 3.51(0.32) & N/A & N/A & N/A & N/A \\
        CFR+$\bar{z}$ & 5.72(0.33) & 5.81(0.38) & 3.70(0.28) & 3.75(0.35) & 8.10(0.35) & 8.35(0.41) & 1.76(0.15) & 1.85(0.20)  \\ \hline
        TARNET & 5.79(0.54) & 5.86(0.62) & 3.78(0.40) & 3.85(0.49) & N/A & N/A & N/A & N/A \\
        TARNET+$\bar{z}$ & 5.40(0.53) & 5.51(0.65) & 3.46(0.35) & 3.59(0.45) & 8.11(0.33) & 8.30(0.32) & 1.76(0.15) & 1.84(0.21) \\ \hline
        ND & 6.52(0.37) & 6.33(0.35) & 2.33(0.33) & 2.08(0.27) & N/A & N/A & N/A & N/A \\
        ND+$\bar{z}$ & 6.57(0.33) & 6.54(0.32) & 1.55(0.33) & 1.63(0.41) & 8.30(0.41) & 8.20(0.66) & 2.98(0.41) & 3.03(0.66)  \\ \hline
        GIAL & 5.32(0.51) & 5.32(0.51) & 1.45(0.31) & 1.45(0.31) & N/A & N/A & N/A & N/A \\
        GIAL+$\bar{z}$ & 5.19(0.58) & 5.19(0.58) & 0.94(0.37) & 0.94(0.37) & 6.54(0.63) & 6.58(0.79) & 2.43(0.54) & 2.42(0.59)  \\ \hline
        GPS & 8.81(0.44) & 8.90(0.72) & 3.66(0.64) & 3.62(0.66) & 8.81(0.44) & 8.90(0.72) & 1.92(0.17) & 1.91(0.25) \\
        PCN & 5.24(0.67) & 5.23(0.73) & 2.51(0.36) & 2.48(0.45) & 8.10(0.343) & 8.02(0.26) & 2.09(0.18) & 2.04(0.33) \\ 
        GCN$+\hat{H}SIC$ & 5.17(0.54) & 5.18(0.49) & 2.43(0.27) & 2.38(0.25) & 8.05(0.24) & 7.83(0.47) & 2.20(0.43) & 2.18(0.60) \\ 
        NetEst & 4.31(0.63) & 4.39(0.74) & 2.20(0.51) & 2.25(0.58) & 7.44(0.36) & 7.67(0.37) & 1.13(0.18) & 1.24(0.28) \\  
        \hline
        DWR & \textbf{2.79(0.21)} & \textbf{2.75(0.30)} & \textbf{0.39(0.24)} & \textbf{0.38(0.26)} & \textbf{3.38(0.36)} & \textbf{3.37(0.42)} & 0.66(0.15) & 0.64(0.19) \\ 
        DWR w/o w & 2.94(0.53) & 2.99(0.52) & 0.80(0.35) & 0.78(0.36) & 3.38(0.60) & 3.46(0.80) & 0.98(0.41) & 0.99(0.47)\\ 
        DWR w/o att & 4.99(1.01) & 4.87(0.97) & 0.82(0.92) & 0.71(0.85) & 4.92(0.72) & 4.96(0.77) & 0.74(0.49) & 0.76(0.51)  \\
        DWR w/o att \& w & 4.73(0.52) & 4.69(0.54) & 0.80(0.54) & 0.74(0.52) & 4.73(0.52) & 4.76(0.50) & \textbf{0.56(0.42)} & \textbf{0.51(0.43)} \\
        \hline
        \hline
        \multicolumn{9}{c}{BlogCatalog} \\ \hline
        & \multicolumn{2}{c}{$\sqrt{\epsilon_{PEHE}^{DE}}$}  &\multicolumn{2}{c}{$\epsilon_{MAE}^{DE}$} & \multicolumn{2}{c}{$\sqrt{\epsilon_{PEHE}^{SE}}$} & \multicolumn{2}{c}{$\epsilon_{MAE}^{SE}$} \\ \hline
        ~ & within-sample & out-of-sample & within-sample & out-of-sample & within-sample & out-of-sample & within-sample & out-of-sample\\ \hline
        CFR & 5.25(0.54) & 5.28(0.47) & 3.58(0.40) & 3.60(0.30) & N/A & N/A & N/A & N/A \\
        CFR+$\bar{z}$ & 5.25(0.42) & 5.08(0.39) & 3.56(0.32) & 3.44(0.27) & 8.04(0.23) & 7.77(0.30) & 1.93(0.13) & 1.81(0.24) \\ \hline
        TARNET & 4.77(0.51) & 4.85(0.51) & 3.17(0.35) & 3.24(0.38) & N/A & N/A & N/A & N/A \\
        TARNET+$\bar{z}$ & 5.12(0.62) & 5.08(0.44) & 3.45(0.43) & 3.42(0.30) & 8.03(0.26) & 7.98(0.39) & 1.91(0.11) & 1.93(0.35) \\ \hline
        ND & 6.29(0.35) & 6.22(0.34) & 2.95(0.74) & 2.93(0.61) & N/A & N/A & N/A & N/A \\
        ND+$\bar{z}$ & 6.21(0.37) & 6.21(0.46) & 2.78(0.63) & 2.74(0.61) & 8.16(0.26) & 8.14(0.29) & 2.43(0.39) & 2.44(0.43) \\ \hline
        GIAL & 3.68(0.40) & 3.68(0.40) & 1.20(0.21) & 1.20(0.21) & N/A & N/A & N/A & N/A \\
        GIAL+$\bar{z}$ & 3.67(0.44) & 3.67(0.44) & 1.16(0.28) & 1.16(0.28) & 7.99(0.50) & 7.85(0.60)& 1.73(0.29) & 1.62(0.53) \\ \hline
        GPS & 10.69(0.55) & 10.89(0.35) & 3.88(0.25) & 3.93(0.24) & 10.69(0.55) & 10.89(0.35) & 2.28(0.21) & 2.31(0.34) \\
        PCN & 6.09(0.51) & 6.00(0.51) & 2.64(0.32) & 2.37(0.41) & 8.29(0.34) & 7.99(0.47) & 3.18(0.89) & 1.99(0.30) \\ 
        GCN$+\hat{H}SIC$ & 5.24(0.67) & 5.23(0.73) & 2.51(0.36) & 2.48(0.45) & 8.10(0.343) & 8.02(0.26) & 2.09(0.18) & 2.04(0.33) \\ 
        NetEst & 3.41(0.48) & 3.44(0.53) & 1.82(0.37) & 1.85(0.37) & 7.75(0.32) & 7.68(0.33) & 1.85(0.21) & 1.71(0.23) \\ \hline
        DWR & \textbf{2.62(0.41)} & \textbf{2.54(0.34)} & \textbf{0.46(0.36)} & \textbf{0.39(0.33)} & 4.85(0.50) & 4.83(0.55) & \textbf{1.12(0.37)} & \textbf{1.09(0.42)} \\
        DWR w/o w & 2.71(0.55) & 2.64(0.61) & 0.89(0.42) & 0.86(0.46) & \textbf{4.54(0.48)} & \textbf{4.65(0.48)} & 1.21(0.36) & 1.24(0.33) \\ 
        DWR w/o att & 3.36(0.70) & 3.34(0.76) & 0.53(0.54) & 0.51(0.60) & 5.86(0.72) & 5.86(0.79) & 1.14(0.64) & 1.25(0.66) \\ 
        DWR w/o att \& w & 3.16(0.57) & 3.13(0.62) & 0.54(0.49) & 0.53(0.50) & 5.55(0.72) & 5.51(0.87) & 1.15(0.60) & 1.27(0.65) \\ 
        \hline \hline
        \multicolumn{9}{c}{Hamsterster} \\ \hline
        & \multicolumn{2}{c}{$\sqrt{\epsilon_{PEHE}^{DE}}$}  &\multicolumn{2}{c}{$\epsilon_{MAE}^{DE}$} & \multicolumn{2}{c}{$\sqrt{\epsilon_{PEHE}^{SE}}$} & \multicolumn{2}{c}{$\epsilon_{MAE}^{SE}$} \\ \hline
        ~ & within-sample & out-of-sample & within-sample & out-of-sample & within-sample & out-of-sample & within-sample & out-of-sample\\ \hline
        CFR & 6.40(0.41) & 6.31(0.41) & 2.39(0.61) & 2.29(0.80) & N/A & N/A & N/A & N/A \\
        CFR+$\bar{z}$ & 6.22(0.38) & 6.29(0.53) & 2.14(0.56) & 2.15(0.62) & 7.50(0.41) & 7.67(0.81) & 1.70(0.17) & 1.88(0.57) \\ \hline
        TARNET & 6.73(1.12) & 6.54(0.90) & 2.29(0.63) & 2.08(0.82) & N/A & N/A & N/A & N/A \\
        TARNET+$\bar{z}$ & 6.38(0.29) & 6.30(0.47) & 2.30(0.61) & 2.34(0.56) & 7.48(0.49) & 7.40(0.77) & 1.74(0.20) & 1.78(0.35) \\ \hline
        ND & 8.71(0.41) & 8.90(0.44) & 2.05(0.90) & 1.85(1.11) & N/A & N/A & N/A & N/A \\
        ND+$\bar{z}$ & 8.68(0.49) & 8.94(0.43) & 3.05(0.52) & 2.88(0.98) & 7.49(0.46) & 7.23(0.99) & 1.74(0.35) & 1.62(0.63) \\ \hline
        GIAL & 7.10(0.40) & 7.10(0.40) & 0.96(0.41) & 0.96(0.41) & N/A & N/A & N/A & N/A \\
        GIAL+$\bar{z}$ & 6.87(0.36) & 6.87(0.36) & 0.71(0.65) & 0.71(0.65) & 4.89(0.44) & 5.36(0.80) & 0.68(0.37) & 0.76(0.45) \\ \hline
        GPS &  7.78(0.33) & 8.20(0.90) & 0.56(2.43) & 0.67(2.36) & 7.78(0.33) & 8.20(0.90) & 0.31(0.93) & 0.33(0.61) \\
        GCN$+\hat{H}SIC$ & 6.24(0.32) & 6.11(0.33) & 2.23(0.48) & 2.20(0.53) & 9.36(0.87) & 9.46(0.83) & 0.55(1.29) & 0.53(1.26) \\ 
        PCN & 4.49(1.04) & 4.66(1.05) & 3.12(1.15) & 3.41(1.14) & 4.48(1.04) & 4.67(1.05) & 1.87(0.06) & 1.88(0.28) \\ 
        NetEst & 9.01(0.37) & 9.21(0.62) & 1.69(0.75) & 1.76(0.99) & 6.36(0.27) & 6.96(0.79) & 0.34(0.27) & 0.39(0.41) \\ \hline 
        DWR & \textbf{3.71(0.19)} & \textbf{3.75(0.26)} & \textbf{0.40(0.47)} & \textbf{0.48(0.60)} & \textbf{2.23(0.28)} & \textbf{2.19(0.50)} & \textbf{0.23(0.23)} & \textbf{0.32(0.35)} \\ 
        DWR w/o w & 4.09(0.32) & 4.17(0.50) & 0.44(0.37) & 0.50(0.40) & 2.48(0.20) & 2.43(0.36) & 0.28(0.33) & 0.31(0.36) \\
        DWR w/o att & 6.96(0.44) & 7.33(0.54) & 0.83(0.90) & 0.73(0.85) & 4.60(0.55) & 5.12(0.61) & 0.34(0.41) & 0.52(0.54) \\ 
        DWR w/o att \& w & 7.05(0.61) & 7.03(0.66) & 0.98(0.63) & 0.96(0.70) & 4.52(0.69) & 5.02(0.51) & 0.41(0.50) & 0.49(0.66) \\  
        \hline \hline
        \multicolumn{9}{c}{Fb-pages-tvshow} \\ \hline
        & \multicolumn{2}{c}{$\sqrt{\epsilon_{PEHE}^{DE}}$}  &\multicolumn{2}{c}{$\epsilon_{MAE}^{DE}$} & \multicolumn{2}{c}{$\sqrt{\epsilon_{PEHE}^{SE}}$} & \multicolumn{2}{c}{$\epsilon_{MAE}^{SE}$} \\ \hline
        ~ & within-sample & out-of-sample & within-sample & out-of-sample & within-sample & out-of-sample & within-sample & out-of-sample\\ \hline
        CFR & 6.54(0.65) & 6.61(0.66) & 1.84(0.36) & 2.02(0.60) & N/A & N/A & N/A & N/A \\
        CFR+$\bar{z}$ & 6.44(0.57) & 6.55(0.63) & 1.94(0.44) & 1.93(0.59) & 6.61(0.36) & 6.73(0.43) & 0.86(0.12) & 1.03(0.36) \\ \hline
        TARNET & 6.89(1.13) & 6.96(1.16) & 2.18(0.52) & 2.08(0.57) & N/A & N/A & N/A & N/A \\
        TARNET+$\bar{z}$ & 6.49(0.55) & 6.48(0.39) & 1.92(0.37) & 1.81(0.59) & 6.61(0.36) & 6.65(0.52) & 0.91(0.10) & 0.86(0.23) \\ \hline
        ND & 8.27(0.59) & 8.38(0.70) & 1.32(0.22) & 1.13(0.53) & N/A & N/A & N/A & N/A \\
        ND+$\bar{z}$ & 8.15(0.54) & 8.23(0.62) & 1.35(0.33) & 1.15(0.52) & 6.72(0.35) & 6.66(0.26) & 1.34(0.26) & 1.38(0.26) \\ \hline 
        GIAL & 6.09(0.42) & 6.09(0.42) & 0.71(0.33) & 0.71(0.33) & N/A & N/A & N/A & N/A \\
        GIAL+$\bar{z}$ & 6.11(0.42) & 6.11(0.42) & 0.72(0.40) & 0.72(0.40) & 6.66(0.41) & 6.62(0.42) & 0.70(0.25) & 0.61(0.38) \\ \hline
        GCN$+\hat{H}SIC$ & 6.24(0.32) & 6.11(0.33) & 2.23(0.48) & 2.21(0.54) & 8.23(0.38) & 8.23(0.40) & 2.52(0.66) & 2.49(0.75) \\ 
        GPS &  8.32(0.35) & 8.16(0.39) & 7.08(0.52) & 6.98(0.56) & 8.32(0.35) & 8.16(0.39) & 1.89(0.09) & 1.98(0.33) \\
        PCN & 6.50(1.05) & 6.64(1.06) & 1.36(1.13) &
        1.51(1.14) & 5.50(1.05) & 5.67(1.05) & 0.87(0.06) & 0.88(0.28) \\ 
        NetEst & 9.03(0.67) & 9.29(0.59) & 2.46(0.84) & 2.72(0.91) & 6.60(0.38) & 6.70(0.52) & 0.51(0.34) & 0.45(0.43) \\ \hline 
        DWR & \textbf{5.61(0.28)} & \textbf{6.22(0.28)} & 0.64(0.53) & 0.73(0.59) & \textbf{6.72(0.46)} & \textbf{6.67(0.53)} & 0.74(0.40) & 0.61(0.59) \\ 
        DWR w/o w & 5.69(0.32) & 6.41(0.51) & 0.56(0.53) & 0.75(0.74) & 6.76(0.50) & 6.80(0.63) & 0.62(0.46) & 0.78(0.44) \\ 
        DWR w/o att & 6.05(0.40) & 6.48(0.45) & \textbf{0.43(0.53)} & \textbf{0.52(0.59)} & 6.81(0.55) & 6.86(0.41) & \textbf{0.47(0.52)} & \textbf{0.60(0.59)} \\ 
        DWR w/o att \& w & 5.71(0.29) & 6.22(0.33) & 0.45(0.37) & 0.40(0.50) & 6.78(0.43) & 6.86(0.68) & 0.50(0.51) & 0.65(0.61) \\ 
         \hline
    \end{tabular}
}}
\end{table*}

\section{Empirical Results}
\subsection{Baselines}
We compared the proposed DWR algorithm with the following baselines: 
\emph{Counterfactual Regression (CFR).} CFR \citet{shalit2017estimating} tries to learn a balanced representation between treated and control groups by restricting the Wasserstein distance between the representation of the two groups. CFR is valid only when the SUTVA assumption holds and ignores the network structure.
\emph{TARNET.} TARNET \citet{shalit2017estimating} is an ablation version of CFR without the representation balancing regulation term.
\emph{Network Deconfounder (ND).} ND \citet{guo2020learning} utilizes the GCN to learn the representation of latent confounders and minimize the integral probability metric between treatment and control groups to obtain a balanced representation; 
\emph{Graph Infomax Adversarial Learning (GIAL).}
GIAL \citet{chu2021graph} further considers the imbalanced network structure on the basis of \citet{guo2020learning};
These methods ignore the interference in the networked observational data and cannot handle the complex confounding bias due to the presence of interference.
\emph{GPS.} \citet{forastiere2021identification} utilize the inherent balancing property of GPS and incorporate it into covariate adjustment methodologies to estimate treatment effects.
\emph{GCN$+\hat{H}$SIC.} \citet{ma2021causal} considers the treatment of neighboring nodes as a feature, thereby accounting for the influence of interference on the outcome. However, since only the treatment of neighbors is regarded as a feature, this approach cannot estimate the spillover effect.
\emph{PCN}. \citet{cristali2022using} leverage node embedding to represent the unmeasured confounders to estimate the spillover effect precisely. However, they still neglect the heterogeneous interference and cannot handle the complex confounding bias in the networked setting.
\emph{NetEst.} NetEst \citet{jiang2022estimating} considers the interference in the networked observational data and resolves the distribution mismatches through adversarial learning. However, it ignores the correlation between the treatment $T$ and the peer exposure $Z$. Moreover, it ignores the heterogeneous interference problem.
We also compared our method with the above methods with homogeneous peer exposure (the fraction of treated neighbors) into regression, denoted as CFR$+\bar{z}$, TARNET$+\bar{z}$, ND$+\bar{z}$, and GIAL$+\bar{z}$.
We show that these methods fail to estimate both direct and spillover effects without considering the confounding bias conducted by the interference and the heterogeneity in the networked observational dataset.
In addition to this, we also compared three ablation versions of the proposed DWR.
We use \textit{DWR w/o w} to denote the DWR algorithm without sample weights, \textit{DWR w/o att} to denote the DWR algorithm without attention weights, and \textit{DWR w/o att \& w} to show the performance of direct regression on the outcome.

\begin{figure}[tbp]
    \centering
    \subfigure[Average $\bar{z}$ v.s ground truth $z$]{
        \begin{minipage}[b]{.45\linewidth}
          \centering
          \includegraphics[width=\linewidth]{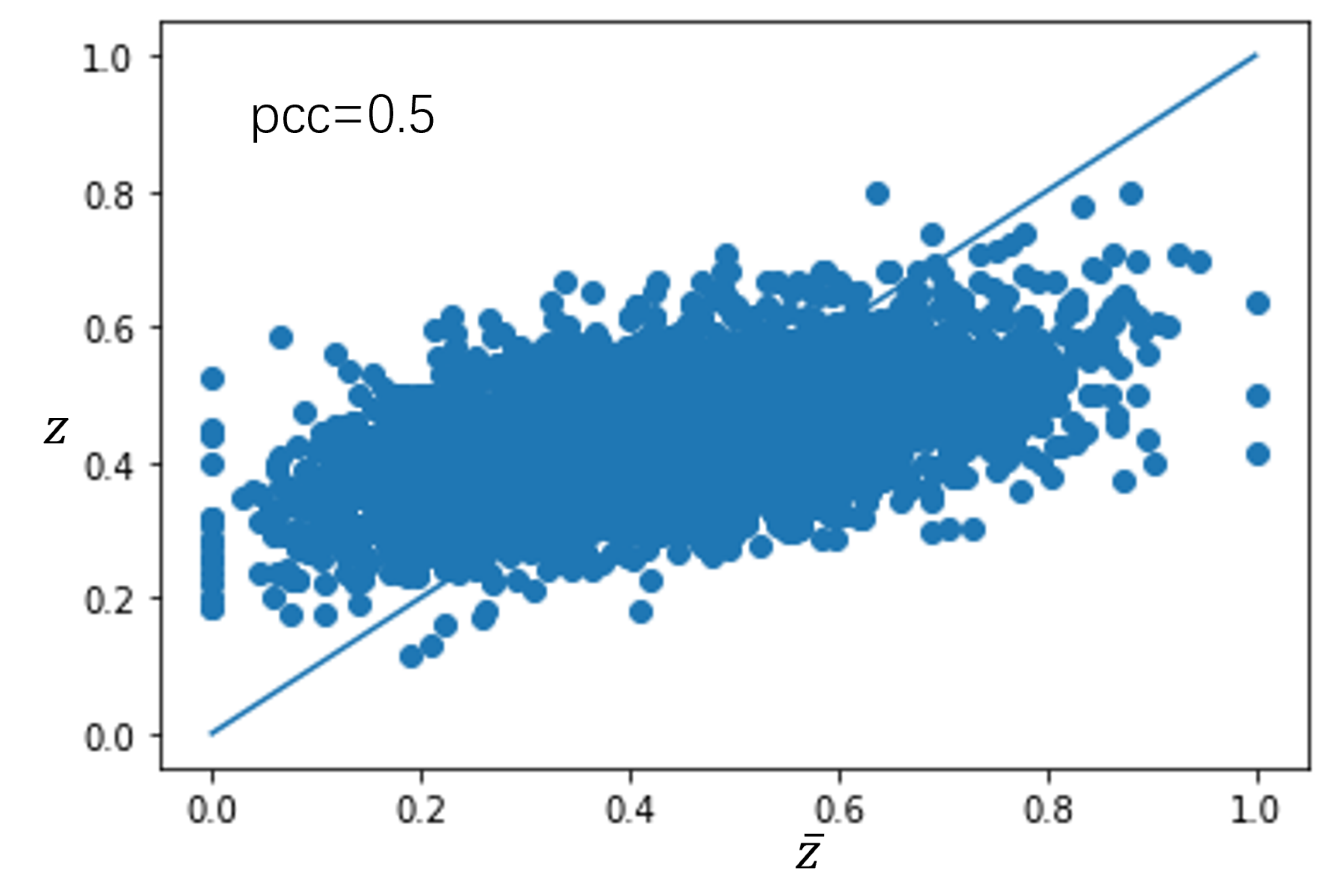}
            \vspace{-0.15in}
          \label{fig:average_z}
        \end{minipage}
  }\subfigure[Weighted average $\hat{z}$ v.s ground truth $z$]{
        \begin{minipage}[b]{.45\linewidth}
          \centering
          \includegraphics[width=\linewidth]{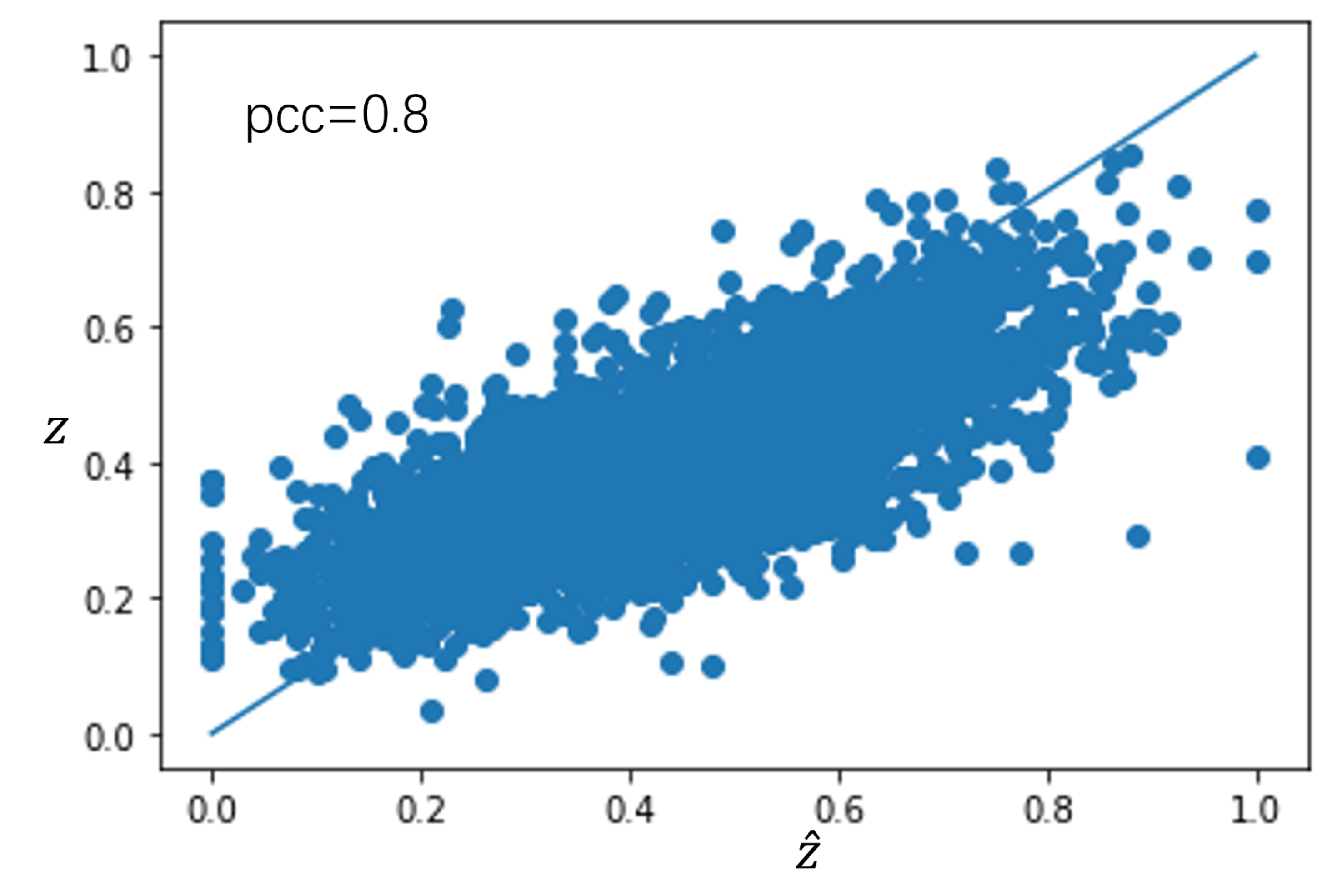}
            \vspace{-0.15in}
        \label{fig:attention_z}
        \end{minipage}
  }
  \vspace{-0.15in}
    \caption{The performance of the learned attention weights. Each blue dot represents a unit's peer exposure and the blue line $y=x$ represents the estimated peer exposure exactly equals the ground truth $z$ }
    \vspace{-0.15in}
    \label{fig:compare_z}
\end{figure}

\subsection{Dataset}
Following previous work on networked observational data \cite{guo2020learning}, we conduct experiments in the four real-world social networks:
\footnote{We use the preprocessed datasets of BlogCatalog and Flickr in \url{https://github.com/rguo12/network-deconfounder-wsdm20} and social networks \emph{Hamsterster} \& \emph{Fb-pages-tvshow} in \url{https://networkrepository.com/}}
\begin{itemize}
    \item \textbf{BlogCatalog \cite{leskovec2016snap}}. BlogCatalog is a graph dataset for a network of social relationships of bloggers listed on the BlogCatalog website.
    \item \textbf{Flickr \cite{leskovec2016snap}}. Flickr is an online social network where users share images and videos.
    The dataset is built by forming links between images sharing common metadata from Flickr. Edges are formed between images from the same location, submitted to the same gallery, group, or set, images sharing common tags, images taken by friends, etc.
    \item \textbf{Hamsterster} \cite{rossi2016interactive}. 
    The network is of the friendships and family links between users of the Hamsterster website.
    \item \textbf{Fb-pages-tvshow} \cite{rossi2016interactive}. These datasets represent blue-verified Facebook page networks of different categories. Nodes represent the pages, and edges are mutual likes among them.
\end{itemize}
Each node in the social network represents a user, and the edges between nodes represent their social relationships.
Due to the counterfactual problem, we can only observe one potential outcome, making it impossible to assess the performance of treatment effect estimates from real-world datasets. Here, we construct semi-synthetic datasets based on previous literature \cite{guo2020learning, shalit2017estimating}.
To maintain the homophily of the networked data, the covariates of each node are generated by the \textit{node2vec} algorithm \cite{grover2016node2vec}. We use the node embedding as covariates $x_i$ for each node $i$. Here we generate the embedding with dimension 10.
To measure the heterogeneous interference among nodes, we use cosine distance as \cite{ma2011recommender} did to measure the similarity between connected units:
\begin{eqnarray}
    e_{ij} &=& \frac{x_i\cdot x_j}{\Vert x_i \Vert \Vert x_j \Vert} \\
    a_{ij} &=& \frac{exp(e_{ij})}{\sum_{k\in N(i)} exp(e_{ik})}
\end{eqnarray}
Then the peer exposure $z_i$ is generated by $\sum_{j\in N(i)} a_{ij} t_j$.
Since we model the causal graph as a chain graph, with undirected edges between the treatment between linked units, we do not have direct access to the joint distribution of the $P({T}|{X})$. 
Following \cite{tchetgen2021auto}, we use the conditional density $p(t_i|x_i,X_{N(i)},{T}_{N(i)})$ (Gibbs factors) to generate the data by Gibbs sampling, which simulates the joint distribution by the following conditional densities:
\begin{eqnarray}
    p(t_i|x_i,X_{N(i)},{T}_{N(i)}) = \alpha_0 x_i + \alpha_1 \sum_{j\in N(i)} a_{ij} x_j + \alpha_2 z_i
\end{eqnarray}
where $\alpha_0$, $\alpha_1$ and $\alpha_2$ are generated by $N(0,1)$.
By Gibbs sampling algorithm, we can generate a stationary distribution of $P(\mathbf{T}|\mathbf{X})$.

The outcome $y_i$ is generated in the following way:
\begin{eqnarray}
\label{eq:y_generation}
    y_i = T \left(\beta_0 x_i + \beta_0 \sum_{j\in N(i)} a_{ij} x_j + \beta_2 z_i \right) \nonumber \\
    + (1-T) \left(\beta_1 x_i + \beta_1 \sum_{j\in N(i)} a_{ij} x_j + \beta_2 z_i \right)
\end{eqnarray}
where $\beta_0 \sim unif(1,2)$ and $\beta_1, \beta_2 \sim unif(0,1)$. With the generated data, we verify the effectiveness of our method compared with the baselines in the next section.

\subsection{Results}
In this section, we report the results of treatment effects estimation on four datasets. We consider the Precision in Estimation of Heterogeneous Effect $\sqrt{\epsilon_{PEHE}} = \sqrt{\frac{1}{n}\sum_{i=1}^n(\hat{\tau}(\mathbf{X}_i) - \tau (\mathbf{X}_i))}$ for estimating individual treatment effects. Here we consider both direct and spillover effects and denote them as $\sqrt{\epsilon_{PEHE}^{DE}}$ and $\sqrt{\epsilon_{PEHE}^{SE}}$. Besides, we consider the mean absolute error of the average treatment effect estimation for both direct and spillover effects, denoted as $\epsilon_{MAE}^{DE}$ and $\epsilon_{MAE}^{SE}$. We carry out the experiments 10 repetitions independently and report the standard deviation of the considered estimate.

The main results are shown in Tab.\ref{tab:main_res}. From the results, we can draw the following conclusions:
(1) Traditional methods neglect the interference problem in networked observational data, leading to poor performance estimating treatment effects.
(2) Although we add $\bar{z}$, which is the proportion of treated neighbors, into the regression process, these models still fail to estimate both direct and spillover effects because of the complex confounding bias and the heterogeneous problem in the networked observational data. 
(3) NetEst considers resolving the bias from the association between $Z$ and $\mathbf{X}$ and achieving better performance than the other baselines.
Since it ignores the association between $T$ and $Z$, meanwhile neglecting the heterogeneous interference, its performance is still worse than that of DWR.
(4) Our method outperforms the baseline methods in both direct and spillover treatment effects estimation since we solve two challenges of heterogeneous and complex confounding bias.
(5) The ablation study shows that the attention mechanism and the sample-reweighting schema significantly improve the model's performance.
(6) From the experimental results, we can observe that the mechanisms of the two weights have a mutually reinforcing effect. This is because if we use Sample weights to reduce confounding bias, the learned attention weights will be more accurate. At the same time, 
the learned attention weights enable the representation and peer exposures to correctly capture the heterogeneity in the network, making the learned sample weights better eliminate the confounding bias.

\textbf{Results on attention weights learning.} 
We investigate the performance of attention weights learning in Fig.\ref{fig:compare_z}.
From these results, we have the following observations and analyses:
(1) Without considering the problem of heterogeneous interference, simply using the proportion of treated neighbors $\bar{z}$ as the peer exposure is severely biased from the ground truth $z$.
(2) With the attention weights learned to measure the heterogeneity in the network, $\hat{z}$ is unbiased compared with the ground truth $z$.
(3) The Pearson correlation coefficient between estimated peer exposure $\hat{z}$ and ground truth $z$ increases from 0.5 to 0.8, which means that the heterogeneity is captured by the learned attention weights.

\begin{figure*}[htbp]
    \centering
    \subfigure[Flickr]{
        \begin{minipage}[b]{.5\linewidth}
          \centering
          \includegraphics[width=\linewidth]{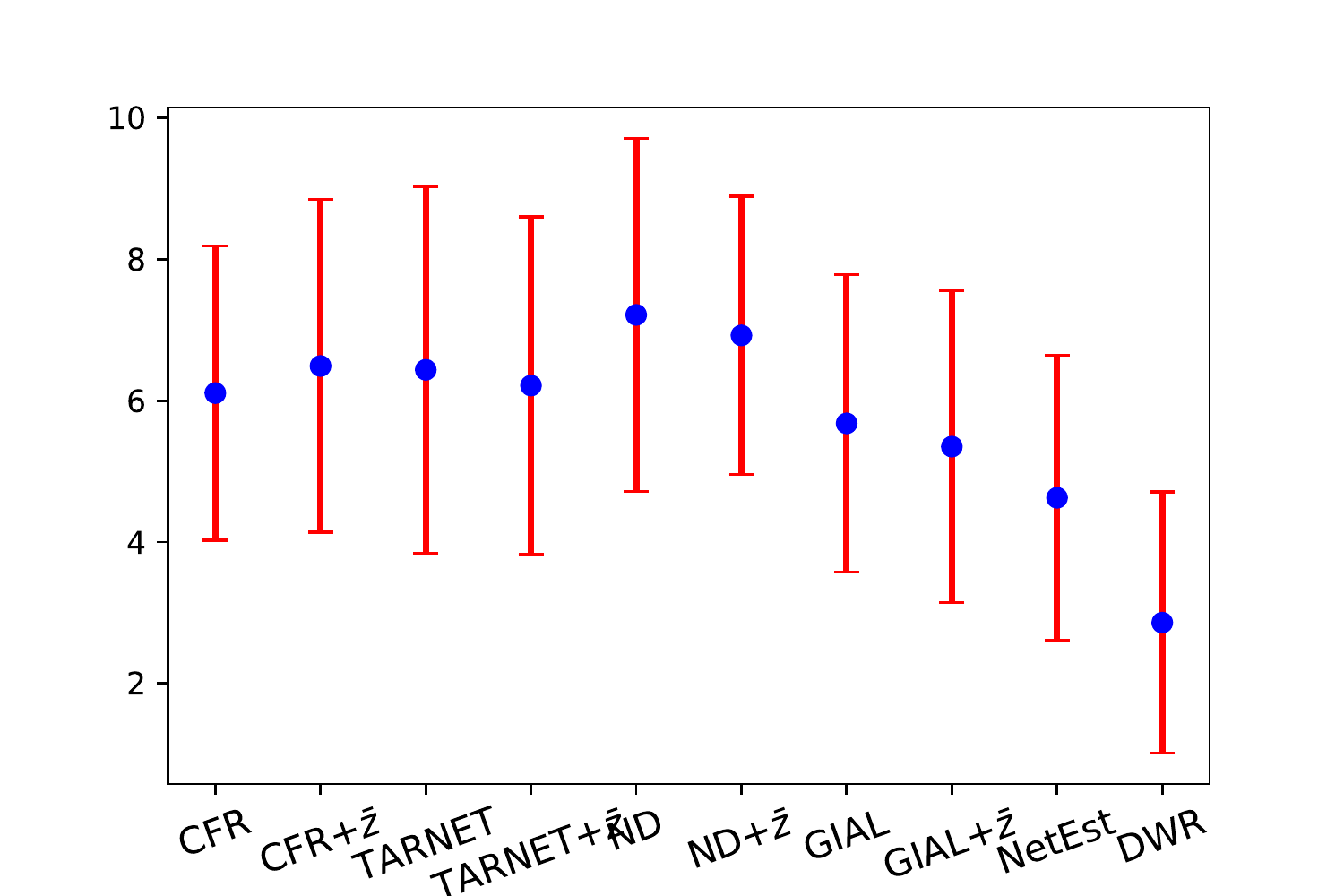}
        \end{minipage}
  }\subfigure[BlogCatalog]{
        \begin{minipage}[b]{.5\linewidth}
          \centering
          \includegraphics[width=\linewidth]{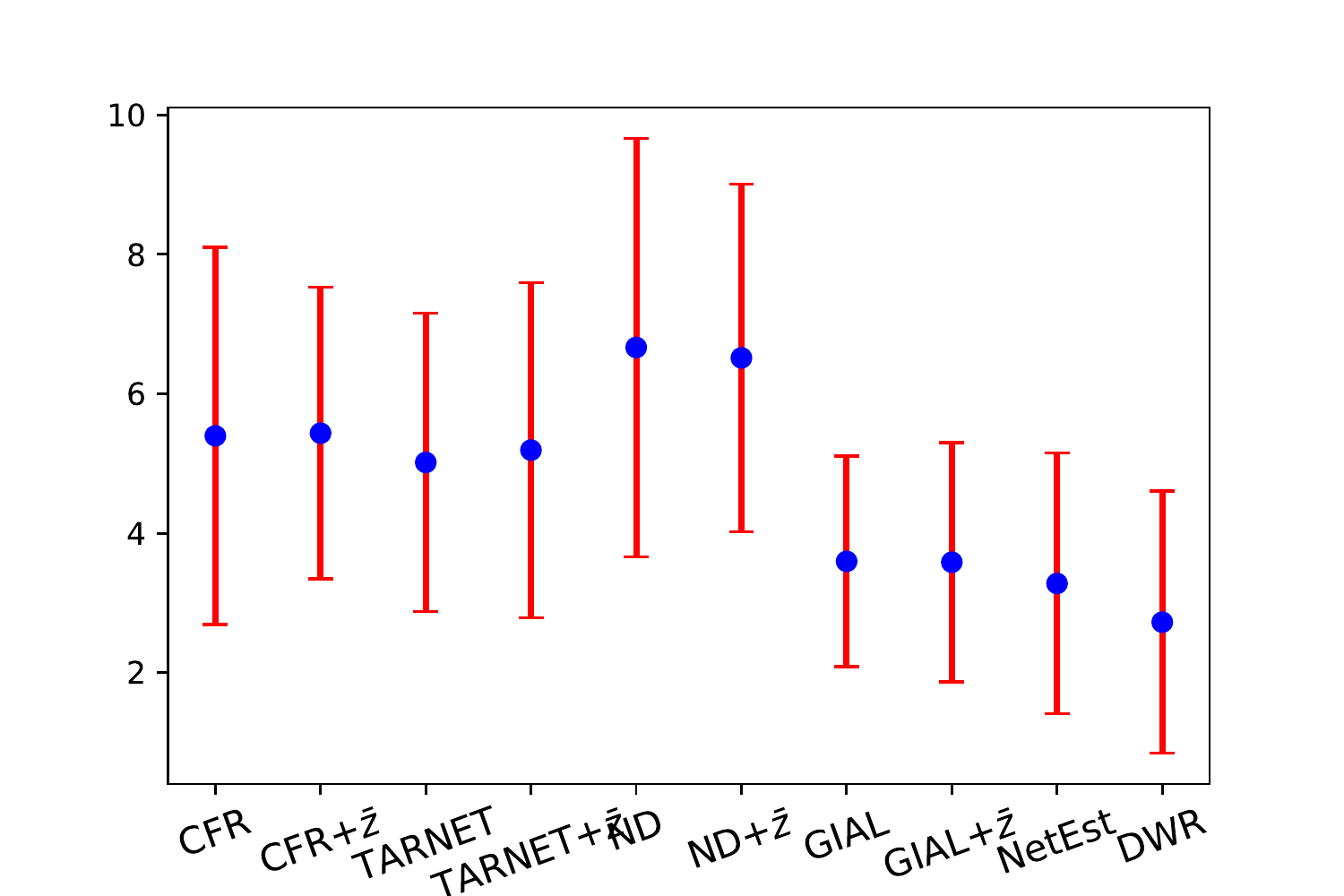}
        \end{minipage}
  }
  
  \subfigure[Hamsterster]{
        \begin{minipage}[b]{.5\linewidth}
          \centering
          \includegraphics[width=\linewidth]{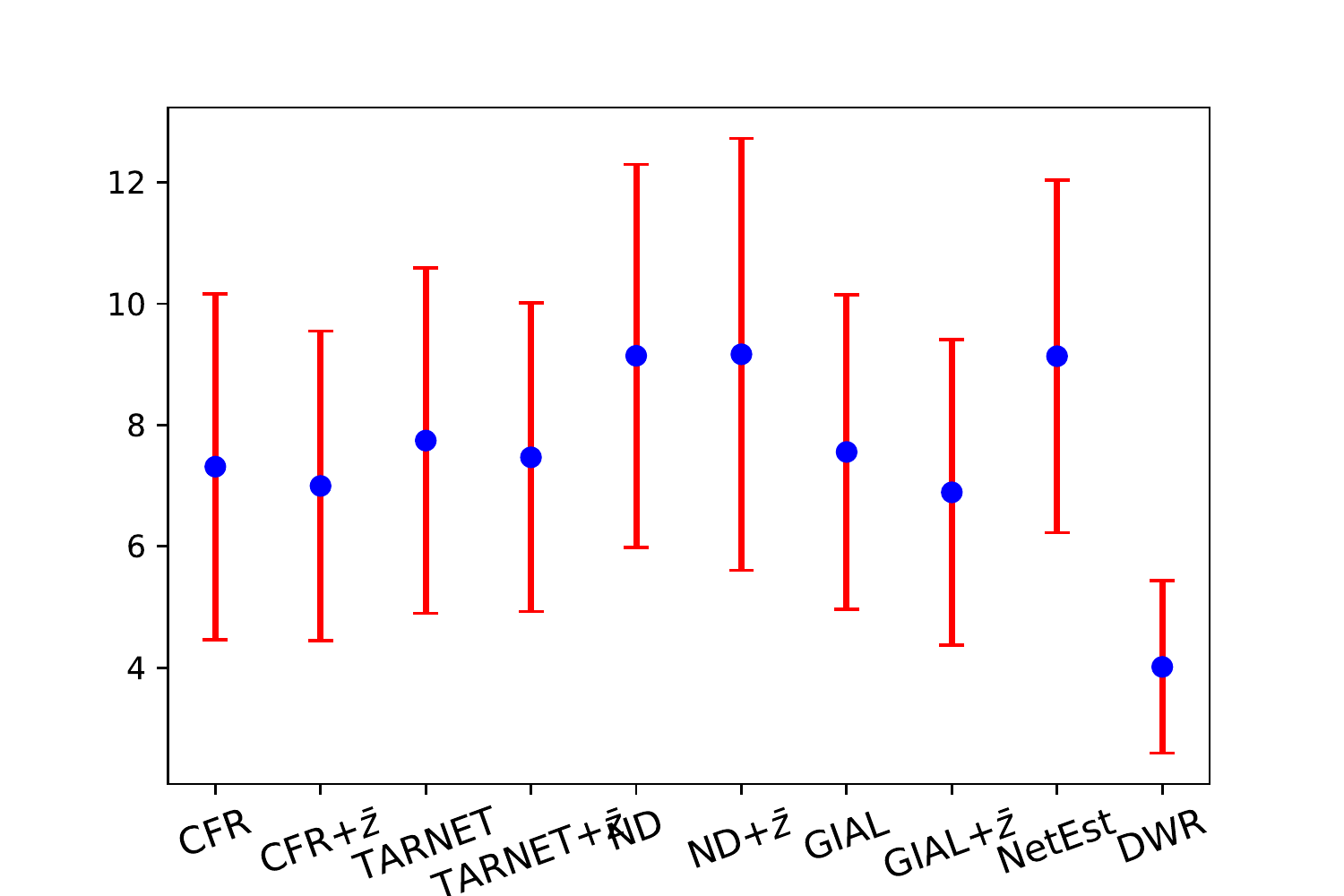}
        \end{minipage}
  }\subfigure[Fb-pages-tvshow]{
        \begin{minipage}[b]{.5\linewidth}
          \centering
          \includegraphics[width=\linewidth]{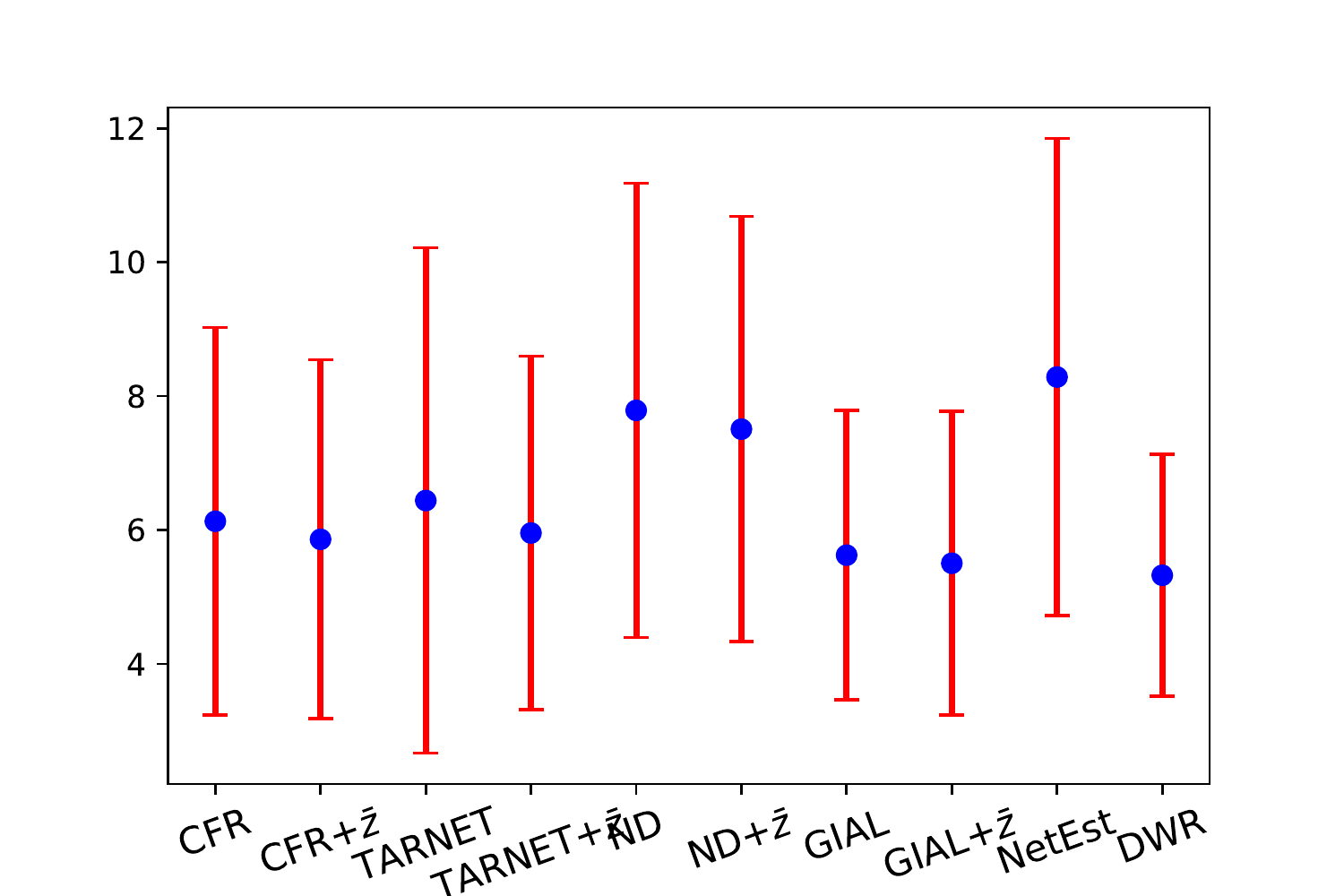}
        \end{minipage}
  }
    \caption{Performance on counterfactual prediction in four datasets.}
    \label{fig:cfp}
\end{figure*}

\textbf{Results on counterfactual prediction.}
We investigate the performance of counterfactual prediction on these datasets. Here, we generated the dataset with $t \sim bern(0.5)$ and $z \sim unif(0,1)$. We show the performance of RMSE and the standard deviation of the estimation in Fig.\ref{fig:cfp}.
From the experiment results, we have the following observations and analyses:
(1) Traditional methods fail to predict counterfactual outcomes since they ignore the interference in the network data and cannot handle the complex confounding bias induced by the interference.
(2) Adding $\bar{z}$ to the regression still fails to address the challenges of heterogeneity and confounding bias.
(3) The proposed DWR algorithm effectively addresses heterogeneity and confounding bias challenges and achieves good performance on counterfactual prediction.

\begin{figure}[tbp]
    \centering
    \subfigure[Direct treatment effect estimation]{
        \begin{minipage}[b]{.5\linewidth}
          \centering
          \includegraphics[width=\linewidth]{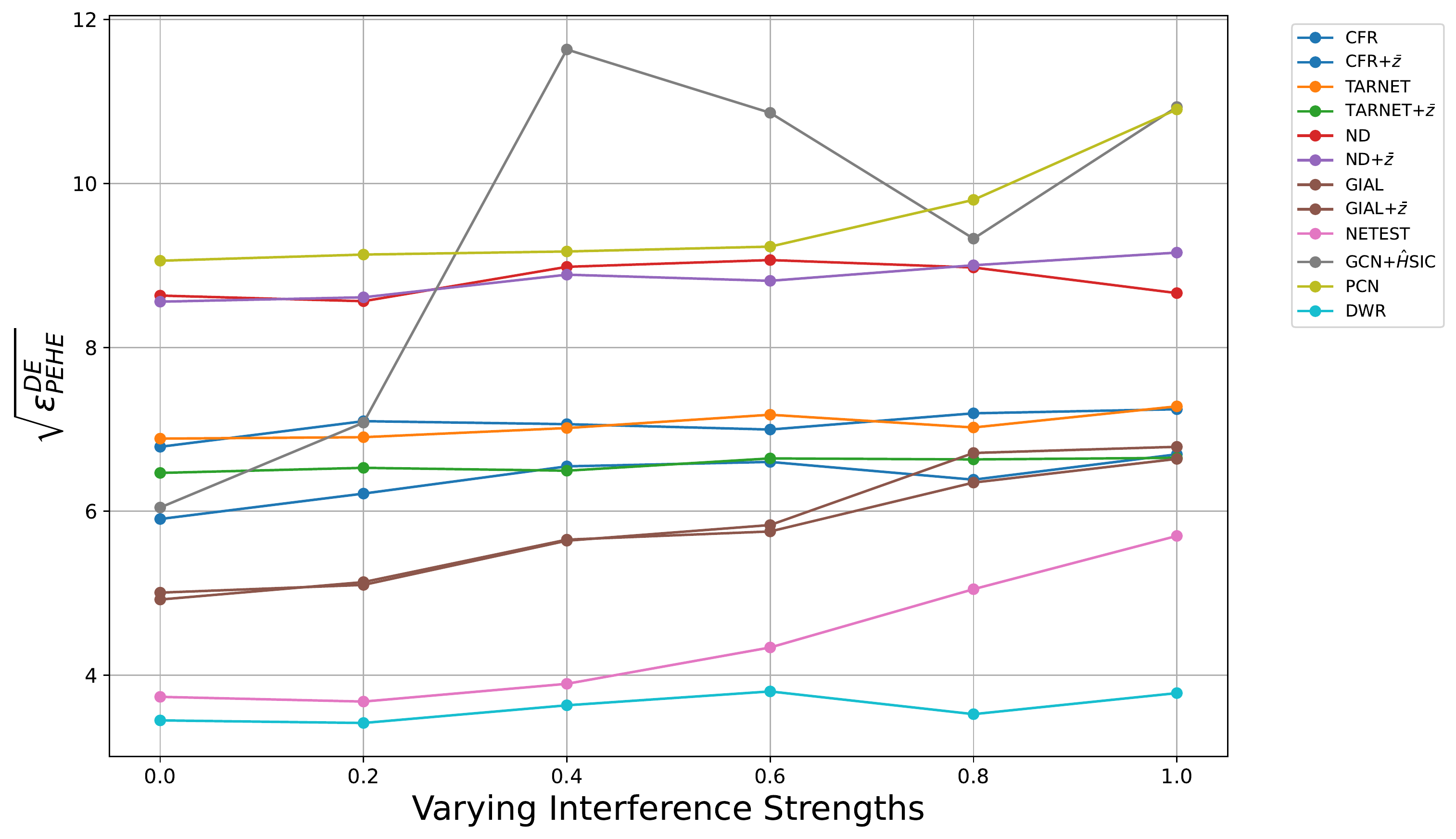}
            \vspace{-0.15in}
        \end{minipage}
  }\subfigure[Spillover effect estimation]{
        \begin{minipage}[b]{.5\linewidth}
          \centering
          \includegraphics[width=\linewidth]{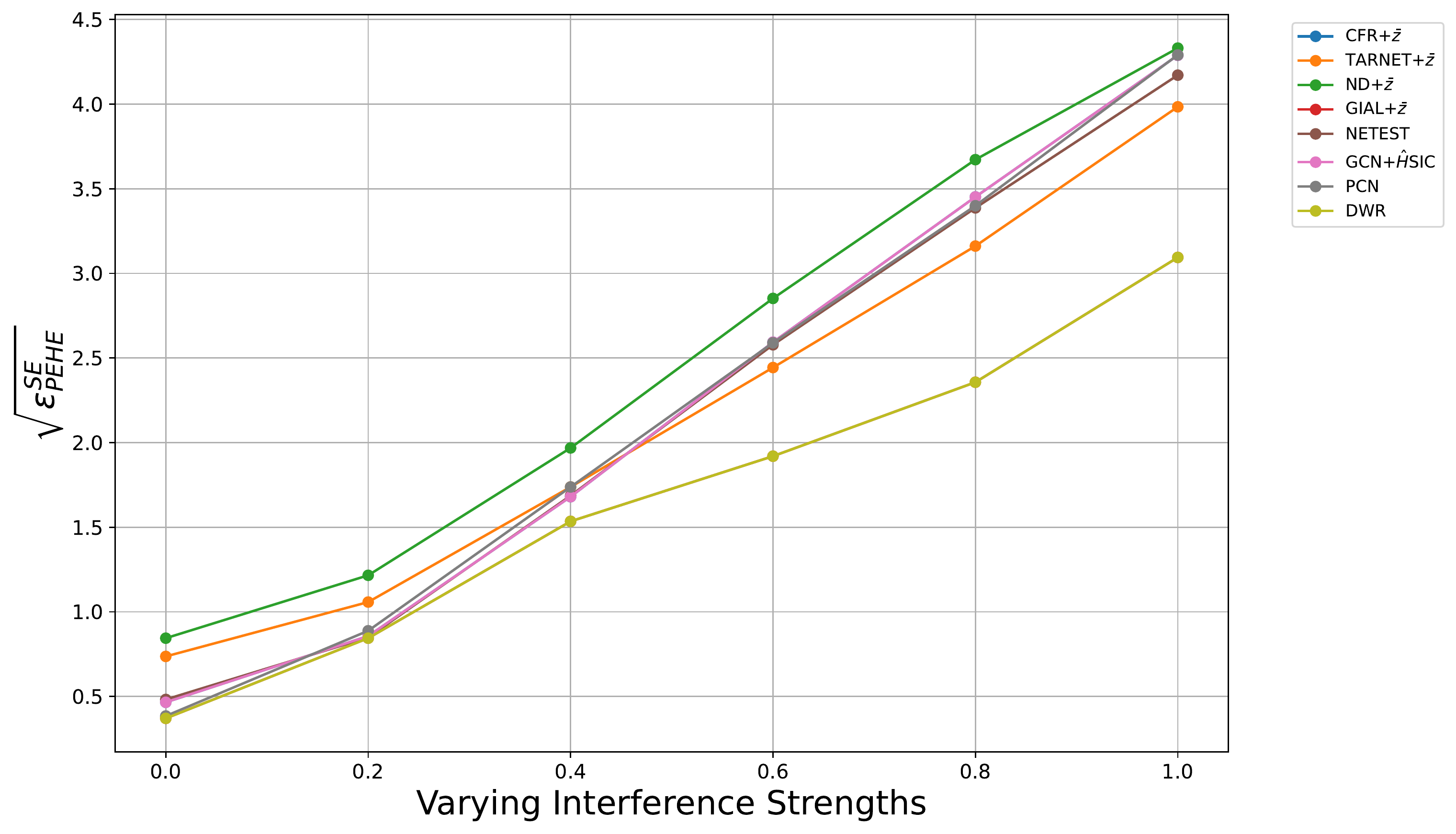}
            \vspace{-0.15in}
        \end{minipage}
  }
  \vspace{-0.15in}
    \caption{Direct treatment effect and spillover effect estimation varying interference strengths in Flickr dataset.}
    \vspace{-0.15in}
    \label{fig:varing_interference}
\end{figure}

\begin{figure*}[htbp]
    \centering
    \subfigure[Flickr]{
        \begin{minipage}[b]{.4\linewidth}
          \centering
          \includegraphics[width=\linewidth]{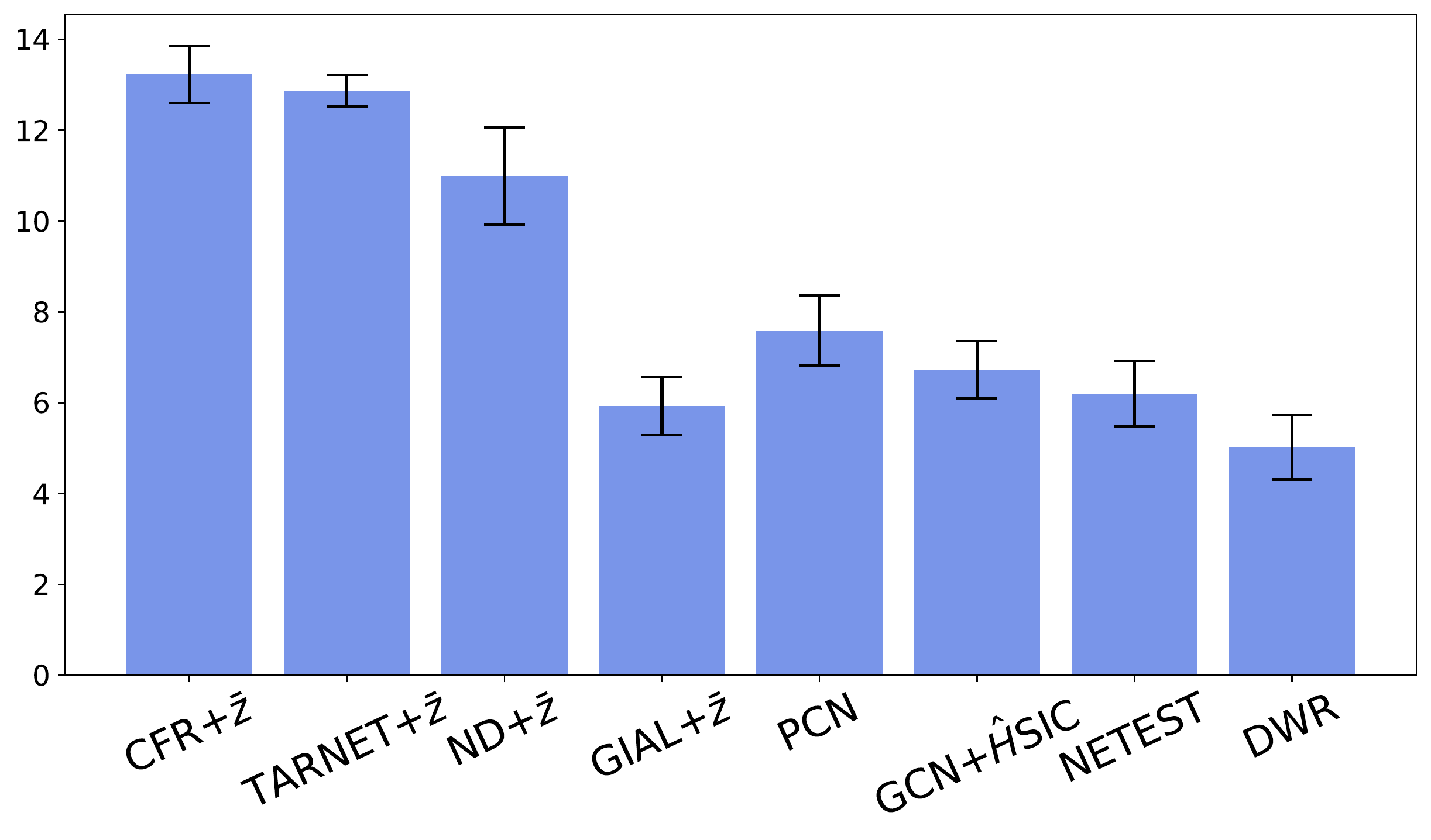}
        \end{minipage}
  }\subfigure[BlogCatalog]{
        \begin{minipage}[b]{.4\linewidth}
          \centering
          \includegraphics[width=\linewidth]{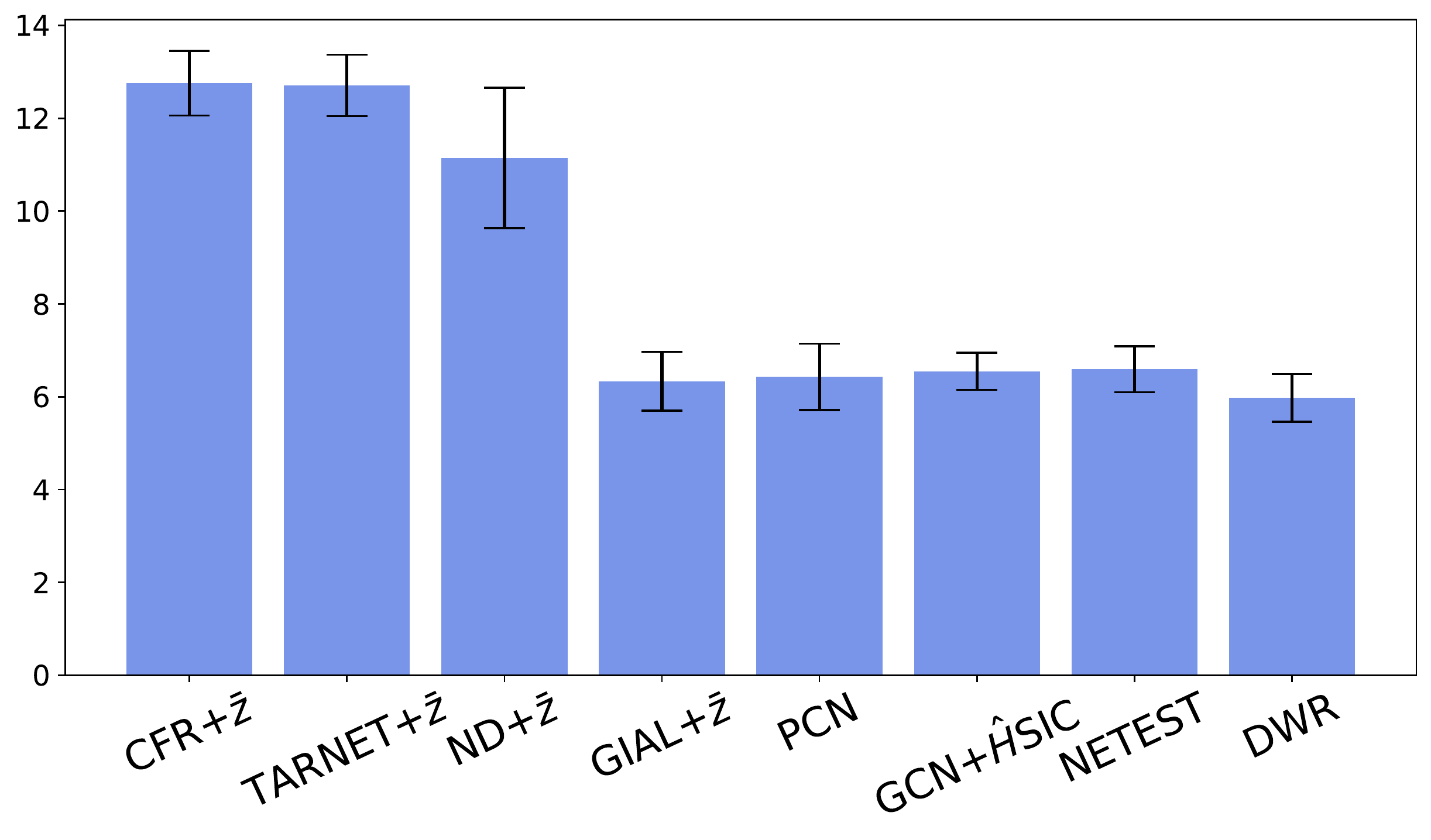}
        \end{minipage}
  }
  
  \subfigure[Hamsterster]{
        \begin{minipage}[b]{.4\linewidth}
          \centering
          \includegraphics[width=\linewidth]{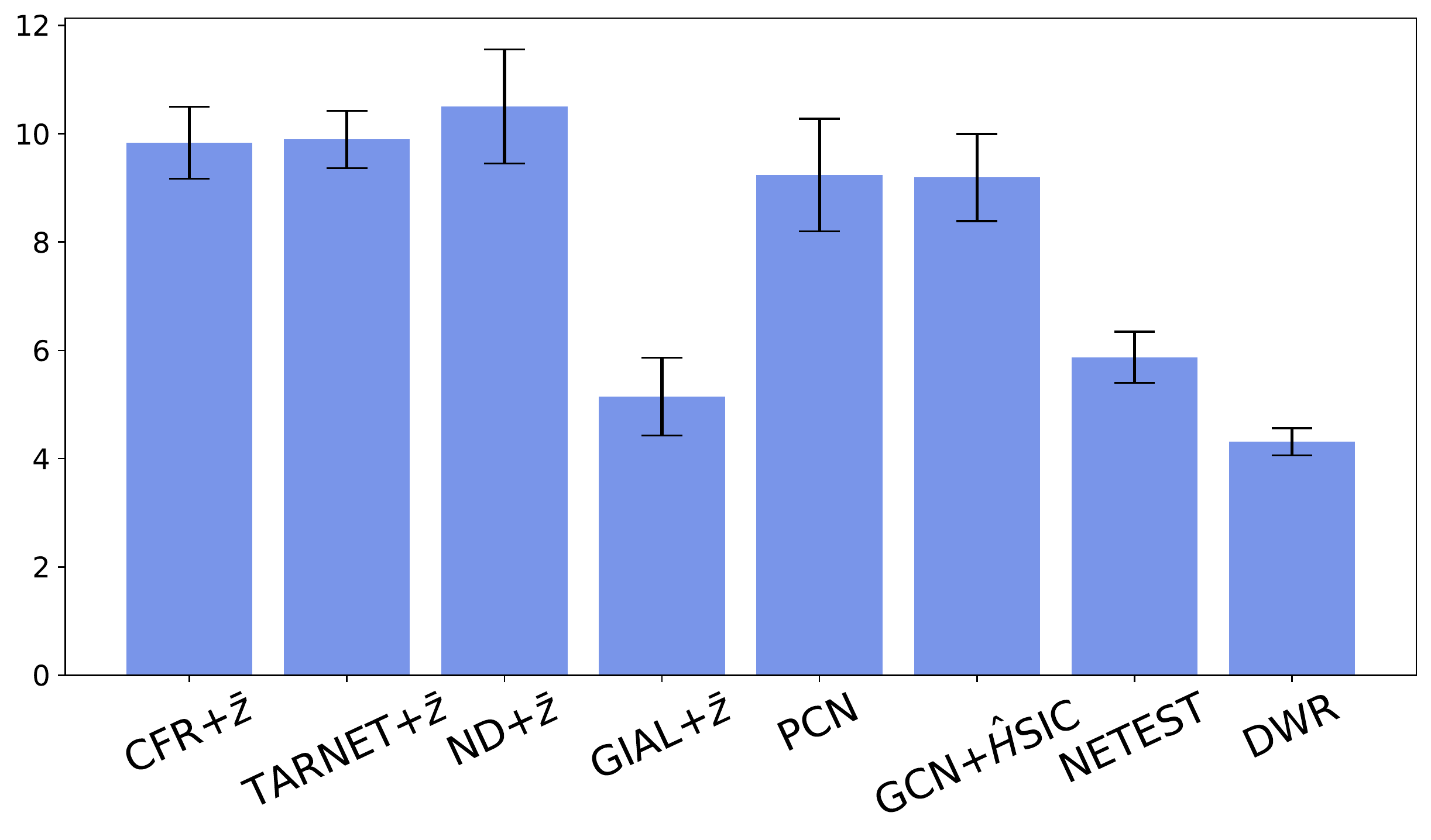}
        \end{minipage}
  }\subfigure[Fb-pages-tvshow]{
        \begin{minipage}[b]{.4\linewidth}
          \centering
          \includegraphics[width=\linewidth]{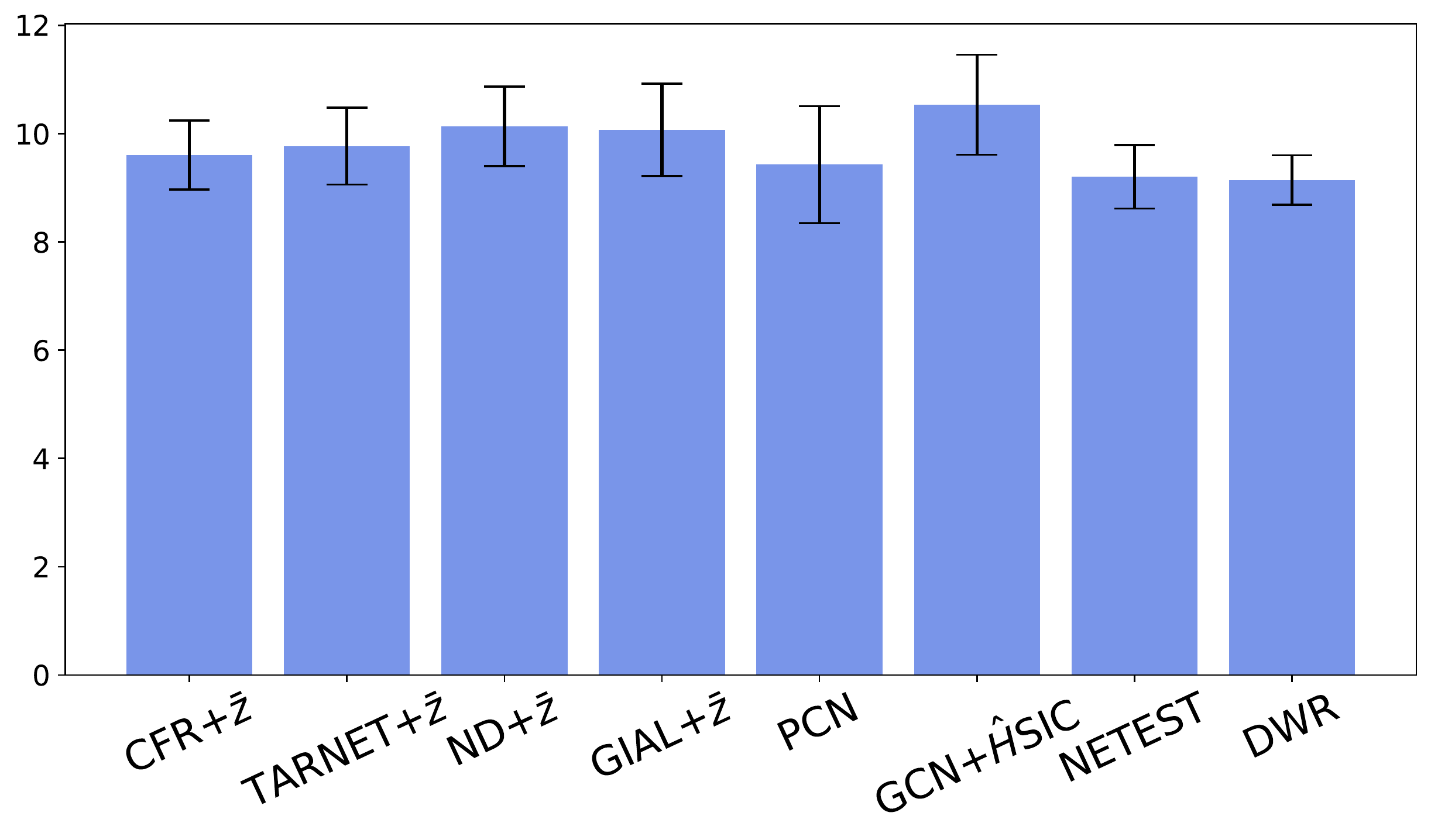}
        \end{minipage}
  }
    \caption{Performance on the PEHE of total effect estimation in four datasets.}
    \label{fig:total_effect}
\end{figure*}

\textbf{Varying the Interference strength.} 
We investigated the effectiveness of various methods in estimating direct and spillover effects under different levels of interference, as defined in Equation \ref{eq:y_generation}, with the results shown in Fig.\ref{fig:varing_interference}. The analysis indicates that an increase in the strength of interference typically results in a decrease in model performance for estimating both direct and indirect causal effects. Notably, the estimation performance for spillover effects declines more as the strength of interference increases, compared to the estimation of direct causal effects. Remarkably, our proposed DWR method consistently outperforms other models in all tested scenarios. Furthermore, the DWR method demonstrates stable performance across various degrees of interference.

\textbf{Total Effect Estimation.} 
Besides the direct causal effect and spillover effect, here we also examine the accuracy of different methods in assessing the total effect. The total effect, which combines the direct causal effect and spillover effect, is defined as follows: $\mathbbm{E}[y(t=1,z=1)-y(t=0,z=0)|\mathbf{X}]$. In Fig.\ref{fig:total_effect}, we show the performance of different methods in estimating the total effect. From the results, we can conclude that: (1) For methods that consider network interference, their performance in estimating the total effect is superior to that of methods that do not consider interference; (2) The proposed DWR method, due to its resolution of heterogeneity and complex confounding bias issues, performs best in estimating the total effect.

\textbf{Implementation details.} 
Our model architecture comprises two Multilayer Perceptron (MLP) layers with hidden states of {32, 64}, followed by a Graph Convolutional Network (GCN) layer for feature extraction. For the estimation of $\hat{Y}_0$ and $\hat{Y}1$, we implement two distinct heads, each featuring a set of three MLP layers with {128, 128, 128} hidden states. The Adam optimizer is employed for minimizing the loss function detailed in Eq.\ref{eq:regression}. For the purpose of learning sample weights, a three-layer MLP with {64, 64, 64} hidden states is utilized, optimizing the $\mathcal{L}{\pi}$ loss function as described in Eq.\ref{eq:loss_weights} using stochastic gradient descent. We incorporate ReLU as the activation function between hidden layers and apply a dropout rate of 0.5 to mitigate overfitting.

To implement the baseline methods, we adhered to the guidelines from Guo et al. \cite{guo2020learning} for Counterfactual Regression (CFR), TARNET, and Network Deconfounder (ND), optimizing the parameter $\alpha$ through a search in the set ${1e-1, 1e-2, 1e-3, 1e-4, 1e-5}$. For the Graph Inference and Learning (GIAL) method, we followed Chu et al.'s approach \cite{chu2021graph}, keeping the hyperparameters $\alpha$ and $\beta$ at $1e-3$. This decision was based on GIAL’s stable performance across various parameters, selecting the optimal combination as reported in their original paper\footnote{Given GIAL's performance stability over a wide parameter range, we opted for the best parameter combination cited in the publication.}. In the case of Network Estimator (NetEst), as per Jiang et al. \cite{jiang2022estimating}, we set the hyperparameters $\alpha$ and $\gamma$ to 0.5, aligning with their recommendations. For GCN$+\hat{H}$SIC, based on Ma et al. \cite{ma2021causal}, we conducted a search for the optimal $\kappa$ within ${0.001, 0.005, 0.1, 0.2}$.

\section{Related works}
\textbf{Networked observational data.}
\citet{guo2020learning} proposed net-deconfounder to utilize the auxiliary network connection information to learn better representation for counterfactual prediction.
\citet{chu2021graph} further considered the imbalance in the network structure and proposed Graph Infomax Adversarial Learning framework.
\citet{veitch2019using} viewed the network structure as a proxy of unobserved confounders and proposes to learn node embedding to adjust unmeasured confounding partially.
These methods try to relax the unconfoundedness assumption by exploring the network information. We posit a similar networked unconfoundedness assumption to ensure the identification of the treatment effect.
Besides, these methods assume the SUTVA assumption to be held, which makes these methods limited in estimating treatment effects in the presence of interference.

\textbf{Interference.}
Much of the current work in the area of causal inference has focused on the problem of network interference in observed data.
\citet{liu2016inverse} proposed a generalized inverse probability-weighted estimator and two Hájek-type stabilized weighted estimators to estimate both direct and spillover treatment effects.
Similarly, to estimate treatment and spillover effects, \citet{van2014causal} and \citet{sofrygin2017semi} proposed targeted maximum likelihood estimators.
\citet{ogburn2017causal} extended this TMLE estimator to allow for contagion and homophily dependence. 
\citet{aronow2017estimating} presents a comprehensive framework for estimating average causal effects under conditions of interference between units in a social network experiment. \citet{bhattacharya2020causal} propose to estimate causal effects in situations where there is significant uncertainty about the network structure of data dependence.
\citet{tchetgen2021auto} takes into account the effect of treatment between neighboring nodes by modeling the traditional causal graph from a directed acyclic graph to a chain graph.
These traditional methods focus only on estimating group effects and fail to make predictions about individual effects. Moreover, these methods ignore the problem of heterogeneous interference. 

Some recent work on estimating individual treatment effects considers modeling the interference. \citet{ma2021causal} integrate neighboring treatments as features to enhance the prediction model.
\citet{ma2022learning} proposed to model the high-order interference with a hypergraph neural network.
These methods focus on eliminating the bias to precisely estimate the direct effect and lack the ability to estimate the peer effect.
\citet{forastiere2021identification} incorporate generalized propensity score into the covariate adjustment methods to estimate treatment effects.
\citet{jiang2022estimating} proposed NetEst to resolve the bias from interference and estimate both direct and indirect effects.
\citet{cristali2022using} leverage node embedding to resolve the unmeasured confounding problem in the presence of interference.
However, they neglected to model the association between treatment and peer exposure. Moreover, they also overlooked the heterogeneous interference in the social network. 
In this paper, we focus on estimating individual treatment effects from networked observational data in the presence of interference.

\textbf{Graph Attention Networks.}
Traditional Graph Convolutional Networks (GCNs) \cite{kipf2016semi} for learning node representations. They treat each node equally without differentiating the neighboring nodes.
\cite{velivckovic2017graph} proposed the Graph Attention Network to address this problem by weighting the contribution of the neighboring nodes differently using an attention mechanism.
We adopt the concept of learning the contribution of neighboring nodes using an attention mechanism to relax the anonymous interference assumption and solve the heterogeneous challenge.

\textbf{Learning Sample Weights}
\cite{li2020continuous} propose to learn sample weights to decorrelate the continuous treatment and the confounders.
\cite{zou2020counterfactual} use the learned sample weight into regression to resolve the confounding bias conducted by bundle treatment.
Here we propose to learn sample weights to resolve the confounding bias in the network scenario. As discussed before, the presence of interference in the network scenario complicates the problem. For these methods, they cannot handle the confounding bias caused by the correlation between these three components.

\section{Conclusion}
In this paper, we focus on estimating individual treatment effects from networked observational data in the presence of heterogeneous interference.
We summarize two challenges under this problem: the heterogeneity of interference and the complex confounding bias in networks.
To address these challenges, we propose a novel Dual Weighting Regression (DWR) algorithm by learning weights to differentiate the neighboring nodes with the attention mechanism and learning sample weights to resolve the confounding bias.
The learning process of the proposed DWR algorithm can be formulated as a bi-level optimization problem.
We further give the generalization error bound of the treatment effect estimation to show the effectiveness of the proposed methods. 
Extensive experimental evaluations demonstrate the superiority of our algorithm over the other baselines.

Looking forward, incorporating self-supervised learning into the DWR framework presents an exciting research avenue. This approach could enable the extraction of latent embeddings from heterogeneous graphs, potentially addressing the critical challenge of unmeasured confounders. Our current work, bound by standard assumptions, has yet to tackle this issue, leaving a significant gap for future exploration through self-supervised learning techniques in heterogeneous graphs\cite{jing2022x, chen2023heterogeneous}.

\bibliographystyle{ACM-Reference-Format}
\bibliography{sample-base}

\end{document}